
\documentclass{article}

\usepackage{microtype}
\usepackage{graphicx}
\usepackage{subfigure}
\usepackage{booktabs} 
\usepackage{amsmath,bm}
\usepackage{enumitem}

\usepackage{hyperref}



\usepackage[accepted]{icml2024}

\usepackage{amsmath}
\usepackage{amssymb}
\usepackage{mathtools}
\usepackage{amsthm}
\newcommand{\fdbd}{\texttt{f}\texttt{DBD}}
\usepackage[capitalize,noabbrev]{cleveref}

\theoremstyle{plain}
\newtheorem{theorem}{Theorem}[section]
\newtheorem{proposition}[theorem]{Proposition}

\theoremstyle{definition}
\newtheorem{definition}[theorem]{Definition}

\theoremstyle{remark}

\newcommand{\<}{\langle}
\renewcommand{\>}{\rangle}
\newcommand{\T}[1]{\mathrm{#1}}

\usepackage[textsize=tiny]{todonotes}

\icmltitlerunning{Fast Decision Boundary based Out-of-Distribution Detector}

\begin{document}

\twocolumn[
\icmltitle{Fast Decision Boundary based Out-of-Distribution Detector}




\begin{icmlauthorlist}
\icmlauthor{Litian Liu}{mit}
\icmlauthor{Yao Qin}{ucsb}
\end{icmlauthorlist}

\icmlaffiliation{mit}{MIT}
\icmlaffiliation{ucsb}{UC Santa Barbara}

\icmlcorrespondingauthor{Litian Liu}{litianl@mit.edu}
\icmlcorrespondingauthor{Yao Qin}{yaoqin@ucsb.edu}

\icmlkeywords{Machine Learning, ICML}

\vskip 0.3in
]



\printAffiliationsAndNotice{}  


\begin{abstract}
Efficient and effective Out-of-Distribution (OOD) detection is essential for the safe deployment of AI systems. Existing feature space methods, while effective, often incur significant computational overhead due to their reliance on auxiliary models built from training features. In this paper, we propose a computationally-efficient OOD detector without using auxiliary models while still leveraging the rich information embedded in the feature space.
Specifically, we detect OOD samples based on their feature distances to decision boundaries. To minimize computational cost, we introduce an efficient closed-form estimation, analytically proven to tightly lower bound the distance. 
Based on our estimation, we discover that In-Distribution (ID) features tend to be further from decision boundaries than OOD features. 
Additionally, ID and OOD samples are better separated when compared at equal deviation levels from the mean of training features.
By regularizing the distances to decision boundaries based on feature deviation from the mean, we develop a hyperparameter-free, auxiliary model-free OOD detector. 
Our method matches or surpasses the effectiveness of state-of-the-art methods in extensive experiments while incurring negligible overhead in inference latency. Overall, our approach significantly improves the efficiency-effectiveness trade-off in OOD detection.
Code is available at: \url{https://github.com/litianliu/fDBD-OOD}.
\end{abstract}

\vspace{-3mm}

\section{Introduction}
\label{sec:introduction}

\begin{figure}
\vspace{-2mm}
\begin{center}
\includegraphics[width=\columnwidth]{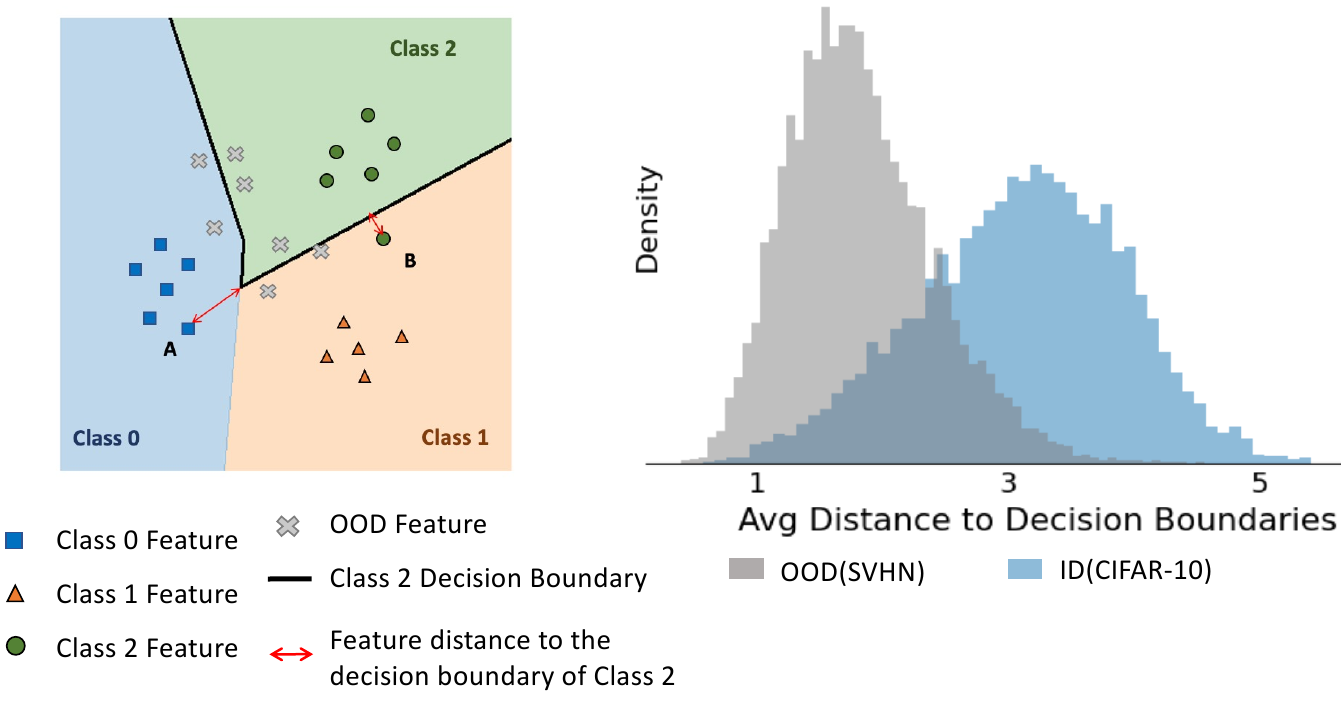}
\end{center} 
\vspace{-5mm}
\caption{
\textbf{Overview.}
\textit{Left: Conceptual Illustration.} The feature distance to decision boundaries on a multi-class classifier's penultimate layer, quantifying the perturbation magnitude needed to alter the model prediction to a class 
(see formal definition in Section~\ref{sec:dist-measure}). 
\textit{Right: Empirical Observation.} 
Features of ID samples (CIFAR-10) tend to reside further from decision boundaries than OOD samples (SVHN).  
The distances are measured using our method (see Section~\ref{sec:dist-measure}) and averages are per sample. 
}
\label{fig:overview}
\vspace{-5.5mm}
\end{figure}

As machine learning models are increasingly deployed in the real world, it is inevitable to encounter samples out of the training distribution.
Since a classifier cannot make meaningful predictions on test samples from classes unseen during training, the detection of Out-of-Distribution (OOD) samples is crucial for taking necessary 
precautions. 
The field of OOD detection, which has recently seen a surge in research interest \cite{yang2021generalized}, divides into two main areas. 
One area investigates the training time regularization to enhance OOD detection \cite{wei2022mitigating, huang2021mos, ming2022exploit}, 
while our work, along with others, delves into \emph{post-hoc} methods, which are training-agnostic and suitable for ready implementation on pre-trained models. 
OOD detectors can be
designed over model output space \cite{liang2018enhancing, liu2020energy, hendrycks2019scaling}.
Additionally, \citet{tack2020csi, lee2018simple, sun2022out} and \citet{sastry2020detecting} use the clustering of In-Distribution (ID) samples in the feature space for OOD detection. 
For example, \citet{lee2018simple} fit a multivariate Gaussian over the training features and detect OOD based on the Mahalanobis distance, and \citet{sun2022out} detect OOD based on the k-th nearest neighbor (KNN) distance to the training features.
While existing feature-space methods are highly effective, their reliance on auxiliary models built from training features incurs additional computational costs.
This poses a challenge for time-critical real-world applications, such as autonomous driving, where the latency of OOD detection becomes a top priority.

In this work, we focus on designing post-hoc OOD detectors for pre-trained classifiers.  
We aim to leverage the rich information in the feature space while optimizing computational efficiency and avoiding the need for auxiliary models built from training statistics.
To this end, we study from the novel perspective of decision boundaries, which naturally summarizes the training statistics.
We begin by asking:
\vspace{-2mm}
\begin{center}
\emph{Where do features of ID and OOD samples reside with respect to the decision boundaries?}
\end{center}
\vspace{-2mm}
To answer the question, we first formalize the concept of the feature distance to a class's decision boundary.
We define the distance as the minimum perturbation in the feature space to change the classifier's decision to the class, visually explained in Figure~\ref{fig:overview}~\textit{Left}.
In particular, we focus on the penultimate layer, \emph{i.e.}, the layer before the linear classification head. 
Due to non-convexity, the distance on the penultimate layer cannot be readily computed. 
To minimize the cost of measuring the distance, we introduce in Section~\ref{sec:dist-measure} an efficient closed-form estimation, analytically proven to tightly lower bound the distance.
Intuitively, feature distances to decision boundaries reflects the difficulty of changing model decisions and can quantify model uncertainty in the feature space.
Unlike output space \emph{softmax} confidence, our feature-space distance uses the rich information embedded in the feature space for OOD detection.

Based on our closed-form distance estimation, we pioneeringly explore OOD detection from the perspective of decision boundaries.
Intuitively, features of ID samples would reside further away from the decision boundaries than OOD samples, since a classifier is likely to be more decisive in ID samples. 
We empirically validate our intuition in Figure~\ref{fig:overview} (\textit{Right}).
Further, we observe that ID and OOD can be better separated when compared at equal deviation levels from the mean of training features. 
Using the deviation level as a regularizer, we design our detection score as a regularized average feature distance to decision boundaries.
The lower the score is, the closer the feature is to decision boundaries, and the more likely the sample is OOD.

Thresholding on the detection scores, we have \textbf{f}ast \textbf{D}ecision \textbf{B}oundary based OOD \textbf{D}etector (\fdbd).
Our detector is hyperparameter-free and auxiliary model-free, eliminating the cost of tuning parameters and reducing the inference overhead.
Moreover, \fdbd~scales linearly 
with the number of classes and the feature dimension, theoretically guaranteed to be computationally scalable for large-scale tasks.
In addition, \fdbd~incooperates class-specific information from the class decision boundary perspective to improve OOD detection effectiveness. 

With extensive experiments, we demonstrate the superior efficiency and effectiveness of our method across various OOD benchmarks on different classification tasks (ImageNet~\cite{deng2009imagenet}, CIFAR-10~\citep{krizhevsky2009learning}), diverse training objectives (cross-entropy $\&$ supervised contrastive loss \citep{khosla2020supervised}), and a range of network architectures (ResNet~\cite{he2016deep} $\&$ ViT~\cite{dosovitskiy2020image} $\&$ DenseNet~\citep{huang2017densely}).
Notably, our \fdbd~consistently achieves or surpasses state-of-the-art OOD detection performance.
In the meantime, \fdbd~maintains inference latency comparable to the vanilla \emph{softmax-confidence} detector, inducing practically negligible overhead in inference latency. 
Overall, our method significantly improves upon the efficiency-effectiveness trade-off of existing methods. We summarize our main contributions below:
\vspace{-3mm}

\begin{itemize}
    \item \textbf{Closed-form Estimation of the Feature Distance to Decision Boundaries
    } 
    In Section~\ref{sec:dist-measure}, we formalize the concept of the feature distance to decision boundaries. 
    We introduce an efficient and effective closed-form estimation method to measure the distance, providing a beneficial tool for the community.  
    \vspace{-2mm}
    
    \item \textbf{Fast Decision Boundary based OOD Detector:}
    Using our estimation method in Section~\ref{sec:dist-measure}, we establish in Section~\ref{sec:fdbd} the first empirical observation that ID features tend to reside further from decision boundaries than OOD features.
    This ID/OOD separation is enhanced when regularized by the feature deviation from the training feature mean. 
    Based on the observation, we propose a hyperparameter-free, auxiliary model-free, and computationally efficient OOD detector from the novel perspective of decision boundaries.
    \vspace{-5.5mm}
    
    \item \textbf{Experimental analysis:}
    In Section~\ref{sec:experiments},
    we demonstrate across extensive experiments that fDBD achieves or surpasses the state-of-the-art OOD detection effectiveness with negligible latency overhead.
    \vspace{-2mm}

    \item \textbf{Theoretical analysis:}
    We theoretically guarantee the computational efficiency of \fdbd~ through complexity analysis. 
    Additionally, we support the effectiveness of our \fdbd~ through theoretical analysis in Section~\ref{sec:justification}.
\end{itemize}

\vspace{-4mm}
\section{Problem Setting}
\vspace{-1mm}
We consider a data space $\mathcal{X}$, a class set $\mathcal{C}$, and a classifier $f:\mathcal{X} \rightarrow \mathcal{C}$, which is trained on samples \emph{i.i.d.} drawn from joint distribution $\mathbb{P}_{\mathcal{X}\mathcal{C}}$.
We denote the marginal distribution of $\mathbb{P}_{\mathcal{X}\mathcal{C}}$ on $\mathcal{X}$ as $\mathbb{P}^{in}$. 
And we refer to samples drawn from $\mathbb{P}^{in}$ as In-Distribution (ID) samples. 
In practice, the classifier $f$ may encounter $\bm{x} \in \mathcal{X}$ which is not drawn from $\mathbb{P}^{in}$.
We say such samples are Out-of-Distribution (OOD).

Since a classifier cannot make meaningful predictions on OOD samples from classes unseen during training, it is important to distinguish between such OOD samples and ID samples for deployment reliability. 
Additionally, for time-critical applications, it is crucial to detect OOD samples promptly to take precautions. 
Instead of using the clustering of ID features and building auxiliary models as in prior art \cite{lee2018simple, sun2022out}, we alternatively investigate OOD-ness from the perspective of decision boundaries, which inherently captures the training ID statistics. 

\section{Detecting OOD using Decision Boundaries}\label{sec:method}

To understand the potential of detecting OOD from the decision boundaries perspective, we ask:
\vspace{-2mm}
\begin{center}
\textit{Where do features of ID and OOD samples reside with respect to the decision boundaries?}
\end{center}
\vspace{-2mm}
To this end, we first define the feature distance to decision boundaries in a multi-class classifier.
We then introduce an efficient and effective method for measuring the distance using closed-form estimation. 
Using our method, we observe that the ID features tend to reside further away from the decision boundaries. 
Accordingly, we propose a decision boundary-based OOD detector.
Our detector is post-hoc and can be built on top of any pre-trained classifiers, agnostic to model architecture, training procedure, and OOD types.
In addition, our detector is hyperparameter-free, auxiliary model-free, and computationally efficient. 

\subsection{Measuring Feature Distance to Decision Boundaries}\label{sec:dist-measure}


We now formalize the concept of the feature distance to the decision boundaries. 
We denote the last layer function of $f$ as $f_{-1}: \mathcal{Z} \rightarrow \mathcal{C}$, which maps a penultimate feature vector $\bm{z}$ into a class $c$.
Since $f_{-1}$ is linear, we can express $f_{-1}$ as: 
\vspace{-1mm}
\begin{equation*}
    f_{-1}(\bm{z}) = \arg \max_{c \in \mathcal{C}} \bm{w}_c^T\bm{z} + b_c, 
\end{equation*}
\vspace{-7mm}

\noindent where $\bm{w}_c$ and $b_c$ are parameters 
corresponding to class $c$. 

\begin{definition}\label{def:uniDistance}
On the penultimate space of classifier $f$, we define the $L2$-distance of feature embedding $\bm{z_x}$ for sample $\bm{x}$ to the decision boundary of class $c$, where $c \neq f(\bm{x})$, as: 
\vspace{-2mm}
\begin{equation*}
    D_f(\bm{z_x}, c) = \inf_{ \{ \bm{z'}: f_{-1}(\bm{z'}) = c \} } \left\lVert \bm{z_x} -\bm{z'}  \right\rVert_2.
\vspace{-4mm}
\end{equation*}
\end{definition}
Here, $\{ \bm{z'}: f_{-1}(\bm{z'}) = c \}$ is the decision region of class $c$ in the penultimate space.
Therefore, the distance we defined is the minimum perturbation required to change the model's decision to class $c$.
Intuitively, the metric quantifies the difficulty of altering the model's decision.

As the decision region is \textit{non-convex} in general as shown in Figure~\ref{fig:overview}, the feature distance to a decision boundary in Definition~\ref{def:uniDistance} does not have a closed-form solution and cannot be readily computed. 
To circumvent computationally expensive iterative estimation, we relax the decision region and propose an efficient and effective estimation method for measuring the distance. 

\begin{theorem}\label{thm:1}
On the penultimate space of classifier $f$, the $L2$-distance between feature embedding $\bm{z_x}$ of sample $\bm{x}$ and the decision boundary of class $c$, where $c \neq f(\bm{x})$, i.e. $D_{f}(\bm{z_x},c)$, is tightly lower bounded by 
\vspace{-2mm}
\begin{equation}\label{eq:closedForm}
\Tilde{D}_f(\bm{z_x}, c) \coloneqq \frac{|(\bm{w}_{f(\bm{x})} - \bm{w}_c)^T \bm{z_x} + (b_{f(\bm{x})} - b_c)|}{\left\lVert \bm{w}_{f(\bm{x})} - \bm{w}_{c} \right\rVert_2},
\vspace{-3mm}
\end{equation}
where $\bm{z_x}$ is the penultimate space feature embedding of $\bm{x}$ under classifier $f$,  $\bm{w}_{f(\bm{x})}$ and $b_{f(\bm{x})}$ are parameters of the linear classifier corresponding to the predicted class  $f(\bm{x})$. 
\vspace{-2mm}
\begin{proof}
For any class $c$, $c \neq f(x)$, let
\vspace{-2mm}
\begin{equation*}
\begin{aligned}
& & \mathcal{Z}_c := &  \{ \bm{z}: f_{-1}(\bm{z'}) = c \} \\
& & = &  \{ \bm{z}: \bm{w}_c^T\bm{z} + b_c >  \bm{w}_{c'}^T\bm{z} + b_{c'} \ \forall c' \neq c\}; \\
& & \mathcal{Z}_c' := &  \{ \bm{z'}: \bm{w}_c^T\bm{z'} + b_c >  \bm{w}_{f(\bm{x})}^T\bm{z'} + b_{f(\bm{x})} \}. 
\vspace{-5mm}
\end{aligned}
\end{equation*}
\vspace{-1mm}
Observe that $\mathcal{Z}_c \subseteq \mathcal{Z}_c'$. 
Therefore, we have
\begin{equation} \label{eq:proof_1}
   D_{f}(\bm{z_x}, c)  = \inf_{\bm{z} \in \mathcal{Z}_c} \left\lVert \bm{z} - \bm{z_x} \right\rVert_2 \geq \inf_{\bm{z'} \in \mathcal{Z}_c'} \left\lVert \bm{z'} - \bm{z_x} \right\rVert_2. 
\vspace{-2mm}
\end{equation}
Note that geometrically $\inf_{\bm{z'} \in \mathcal{Z}_c'} \left\lVert \bm{z'} - \bm{z_x} \right\rVert_2$ represents the $l_2$ distance from $\bm{z_x}$ to hyperplane
\vspace{-1mm}
\begin{equation}\label{eq:hyperplane}
(\bm{w}_{f(\bm{x})} - \bm{w}_c)^T \bm{z} + (b_{f(\bm{x})} - b_c) = 0,
\vspace{-5mm}
\end{equation}

\noindent and thus
\vspace{-3mm}
\begin{equation} \label{eq:proof_2}
\inf_{\bm{z'} \in \mathcal{Z}_c'} \left\lVert \bm{z'} - \bm{z_x} \right\rVert_2 = \frac{|(\bm{w}_{f(\bm{x})} - \bm{w}_c)^T \bm{z_x} + (b_{f(\bm{x})} - b_c)|}{\left\lVert \bm{w}_{f(\bm{x})} - \bm{w}_{c} \right\rVert_2}.
\vspace{-2mm}
\end{equation}
\vspace{-4mm}

\noindent Combining Eqn.~\eqref{eq:proof_2} with Eqn.~\eqref{eq:proof_1}, we conclude that Eqn.~\eqref{eq:closedForm} lower bounds $D_{f}(\bm{z_x},c)$. 

We now show that equality in Eqn.~\eqref{eq:proof_1} holds for class $c_2$, corresponding to the nearest hyperplane to the sample embedding $\bm{z_x}$, i.e., 
\vspace{-3mm}
\begin{equation}\label{eq:c-def}
    c_2 := \arg \min_{c \in \mathcal{C}, c \neq f(\bm{x})} \inf_{\bm{z'} \in \mathcal{Z}_c'} \left\lVert \bm{z'} - \bm{z_x} \right\rVert_2.
\vspace{-1.5mm}
\end{equation}
Let the projection of $\bm{z}_{\bm{x}}$ on the nearest hyperplane be $\bm{p}_{\bm{x}}$.
From Eqn.~\eqref{eq:c-def}, for all $c \notin \{c_2,  f(\bm{x})\}$, we have 
\begin{equation}\label{eq:p-def}
\left\lVert \bm{p}_{\bm{x}} - \bm{z_x} \right\rVert_2
= \inf_{\bm{z'} \in \mathcal{Z}_{c_2}'} \left\lVert \bm{z'} - \bm{z_x} \right\rVert_2 
\leq
\inf_{\bm{z'} \in \mathcal{Z}_c'} \left\lVert \bm{z'} - \bm{z_x} \right\rVert_2. 
\end{equation}
Consequently, we have $\bm{p_x} \in {\mathcal{Z}'_{c}}^\complement$, i.e. $\bm{w}_c^T\bm{p_x} + b_c \leq \bm{w}_{f(\bm{x})}^T\bm{p_x} + b_{f(\bm{x})}$ for any $c \notin \{f(\bm{x}), c_2\}$. 
Intuitively, as all other hyperplanes are further away from $\bm{z_x}$ than $\bm{p_x}$, $\bm{p_x}$ and $\bm{z_x}$ must fall on the same side of each hyperplane. 
Therefore, $p_{\bm{x}}$ falls within the closure of $\mathcal{Z}_{c_2}$, i.e. $p_{\bm{x}} \in \overline{\mathcal{Z}}_{c_2}$.
It follows that
\vspace{-3mm}
\begin{equation}\label{eq:upper-bound}
\left\lVert \bm{p_x} - \bm{z_x} \right\rVert_2 \geq\inf_{\bm{z} \in \mathcal{Z}_{c_2}} \left\lVert \bm{z} - \bm{z_x} \right\rVert_2.
\end{equation}
\vspace{-8mm}

\noindent  Combining Eqn.~\eqref{eq:p-def} and Eqn.~\eqref{eq:upper-bound},  we see that equality holds in Eqn.~\eqref{eq:proof_1} for $c = c_2$. 
Therefore, we conclude that Eqn.~\eqref{eq:closedForm} tightly lower bounds $D_{f}(\bm{z_x},c)$
\end{proof}
\end{theorem}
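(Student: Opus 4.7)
The plan is to prove the claim in two stages: first establish that the closed-form quantity $\Tilde{D}_f(\bm{z_x},c)$ is a lower bound on $D_f(\bm{z_x},c)$ for every $c \neq f(\bm{x})$, and then exhibit at least one class $c$ for which equality is attained (tightness). The key geometric observation driving both stages is that the decision region $\mathcal{Z}_c$ is the intersection of $|\mathcal{C}|-1$ half-spaces (one per competing class), while the single half-space $\mathcal{Z}_c'$ that only beats the predicted class $f(\bm{x})$ is a superset of $\mathcal{Z}_c$.

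For the lower bound, I would define $\mathcal{Z}_c$ and $\mathcal{Z}_c'$ as above, note the inclusion $\mathcal{Z}_c \subseteq \mathcal{Z}_c'$, and conclude that the infimum distance to $\mathcal{Z}_c'$ cannot exceed the infimum distance to $\mathcal{Z}_c$. Since $\mathcal{Z}_c'$ is defined by a single linear inequality $(\bm{w}_{f(\bm{x})}-\bm{w}_c)^T\bm{z}+(b_{f(\bm{x})}-b_c) < 0$, and $\bm{z_x}$ lies on the opposite side of this hyperplane (because $f(\bm{z_x})=f(\bm{x})$), the distance reduces to the standard point-to-hyperplane formula, which is precisely $\Tilde{D}_f(\bm{z_x},c)$.

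For tightness, I would choose $c_2$ as the class whose relaxed hyperplane is closest to $\bm{z_x}$, and let $\bm{p_x}$ denote the orthogonal projection of $\bm{z_x}$ onto that hyperplane. The heart of the argument is to check that $\bm{p_x}$ lies in the closure $\overline{\mathcal{Z}}_{c_2}$, which is equivalent to verifying that, for every other class $c \notin \{f(\bm{x}),c_2\}$, the point $\bm{p_x}$ satisfies $\bm{w}_{c_2}^T\bm{p_x}+b_{c_2} \geq \bm{w}_c^T\bm{p_x}+b_c$. To see this, I would compare $\bm{p_x}$ and $\bm{z_x}$ across each competing hyperplane: since $\bm{p_x}$ is closer to $\bm{z_x}$ than any other relaxed hyperplane (by choice of $c_2$), the segment from $\bm{z_x}$ to $\bm{p_x}$ does not cross any of the other hyperplanes, so $\bm{p_x}$ and $\bm{z_x}$ lie on the same side of each, placing $\bm{p_x}$ within the closed decision region of $c_2$. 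Combined with the lower bound, this yields equality for $c = c_2$.

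The main obstacle I anticipate is making the sameness-of-side argument fully rigorous: the intuition that a shorter step cannot cross a farther hyperplane needs to be converted into the algebraic inequality $\bm{w}_c^T\bm{p_x}+b_c \leq \bm{w}_{f(\bm{x})}^T\bm{p_x}+b_{f(\bm{x})}$ for all $c \notin \{f(\bm{x}),c_2\}$. I expect this to follow from the characterization $c_2 = \arg\min_c \|\bm{z_x}-\bm{p}_{\bm{x},c}\|_2$ combined with the fact that $\bm{p_x}$ is already on the boundary $\bm{w}_{c_2}^T\bm{p_x}+b_{c_2} = \bm{w}_{f(\bm{x})}^T\bm{p_x}+b_{f(\bm{x})}$, so transitivity across hyperplanes delivers the desired inequalities. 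Once this is secured, the two-sided bound collapses into equality and tightness is established.
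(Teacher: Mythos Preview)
Your proposal is correct and essentially identical to the paper's own proof: both establish the lower bound via the inclusion $\mathcal{Z}_c \subseteq \mathcal{Z}_c'$ and the point-to-hyperplane formula, and both prove tightness by projecting onto the nearest relaxed hyperplane (class $c_2$) and arguing that the projection $\bm{p_x}$ stays on the $f(\bm{x})$-side of every other hyperplane, hence lies in $\overline{\mathcal{Z}}_{c_2}$. The ``transitivity'' you mention in the last paragraph is exactly how the paper closes the argument: from $\bm{w}_c^T\bm{p_x}+b_c \leq \bm{w}_{f(\bm{x})}^T\bm{p_x}+b_{f(\bm{x})} = \bm{w}_{c_2}^T\bm{p_x}+b_{c_2}$ one reads off $\bm{p_x}\in\overline{\mathcal{Z}}_{c_2}$.
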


\noindent\textbf{Effectiveness of Distance Measure}
Our Theorem~\ref{thm:1} analytically guarantees the effectiveness of our method. 
In addition, we empirically validate that our estimation method achieves high precision with a relative error of less than $1.5\%$.
See details in Appendix~\ref{app:measuring_method}. 

\noindent\textbf{Efficiency of Distance Measure}
Analytically, Eqn.~\eqref{eq:closedForm} can be computed in constant time on top of the inference process. 
Specifically, the numerator in Eqn.~\eqref{eq:closedForm} calculates the absolute difference between corresponding logits generated during model inference. 
And the denominator takes a finite number of $|\mathcal{C}|\times(|\mathcal{C}| - 1$) possible values, which can be pre-computed and 
retrieved in constant time during inference.
Empirically, our method incurs negligible inference overhead. 
In particular, on a Tesla T4 GPU, the average inference time on the CIFAR-10 classifier is 0.53ms per image \emph{with} or \emph{without} computing the distance using our method. 
In contrast, the alternative way of estimating the distance through iterative optimization takes 992.2ms under the same setup. 
This empirically validates the efficiency of our proposed estimation.  
See details in Appendix~\ref{app:measuring_method}. 
\begin{figure}
\begin{center}
\vspace{-1mm}
\includegraphics[width=\columnwidth]{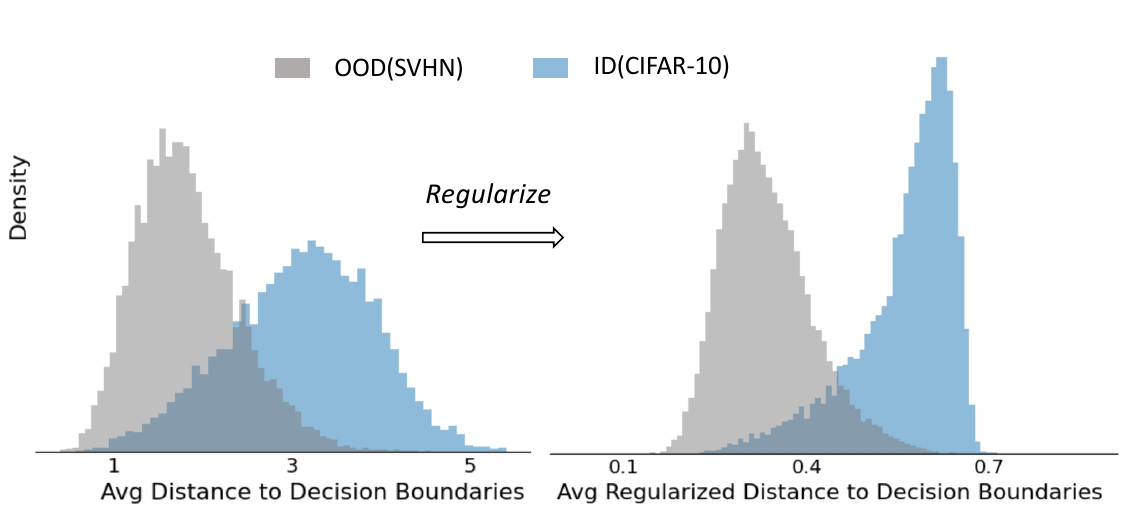}
\end{center} 
\vspace{-2mm}
\caption{
\textbf{Regularization enhances ID/OOD separation.} 
\textit{Left:} Histograms of ID/OOD features based on the average distance to decision boundaries.
\textit{Right:} Histograms of ID/OOD features based on the \emph{regularized} average distance to decision boundaries, 
which effectively compares ID and OOD features at equal deviation levels from the mean of training features.
}
\vspace{-1mm}
\label{fig:regularization}
\end{figure}

For the rest of the paper, we use our closed-form estimation in Eqn.~\eqref{eq:closedForm} to empirically study the relation between OOD-ness and the feature distance to decision boundaries, and to design our OOD detector. 

\subsection{Fast Decision Boundary based OOD Detector}\label{sec:fdbd}

We now study OOD detection from the perspective of decision boundaries.  
Recall that the feature distance to decision boundaries measures the minimum perturbation required to change the classification result.
Intuitively, the distance reflects the difficulty of changing the model's decision. 
Given that a model tends to be more certain on ID samples, we hypothesize that ID features are more likely to reside further away from the decision boundaries compared to OOD features.
We extensively validate our hypothesis in Appendix~\ref{app:individual} with plots showing ID/OOD feature distance to decision boundaries.
And we spotlight our empirical study by visualizing the per-sample average feature distance to decision boundaries for ID/OOD set in Figure~\ref{fig:regularization}~(\textit{Left}). 

Going one step further, we investigate the overlapping region of ID/OOD under the metric of the average distance to decision boundaries. 
To this end, we present Figure~\ref{fig:intuition}, where we group ID and OOD samples into buckets based on their deviation levels from the mean of training features. 
For each group, we plot the mean and variance of the average distance to decision boundaries. 
Examining Figure~\ref{fig:intuition}, we discover that the average feature distance to decision boundaries of both ID and OOD samples increases as features deviate from the mean of training features. 
We provide theoretical insights into this observation in Section~\ref{sec:justification}.
Consequently, OOD samples with a higher deviation level cannot be well distinguished from ID samples that fall into a lower deviation level. 
In contrast, within the same deviation level, OOD can be much better separated from ID samples.

Based on the understanding, we design our OOD detection score as the average feature distances to decision boundaries, \emph{regularized} by the feature distance to the mean of training features:
\vspace{-1mm}
\begin{equation}\label{eq:regDBScore}
    \mathtt{regDistDB} \coloneqq 
    \frac{1}{|\mathcal{C}| - 1} \sum_{\substack{c \in \mathcal{C}}, \ c \neq f(\bm{x}) } \frac{\Tilde{D}_f(\bm{z_x}, c)}{\| \bm{z_x} - \bm{\mu}_{train} \|_2},
\vspace{-1mm}
\end{equation}
where $\Tilde{D}_f(\bm{z_x}, c)$ is the estimated distance defined in Eqn.~\eqref{eq:closedForm} and $\bm{\mu}_{train}$ denotes the mean of training features. 
The score approximately compares ID and OOD samples at the same deviation levels. 
As demonstrated in Figure~\ref{fig:regularization}, the regularized distance score enhances the ID/OOD separation, which we explain theoretically in Appendix~\ref{app: regularization}. 
By applying a threshold on $\mathtt{regDistDB}$, we introduce the \textbf{f}ast \textbf{D}ecision \textbf{B}oundary based OOD \textbf{D}etector (\fdbd), which identifies samples below the threshold as OOD. 

\begin{figure}
\begin{center}
\includegraphics[width=0.8\columnwidth]{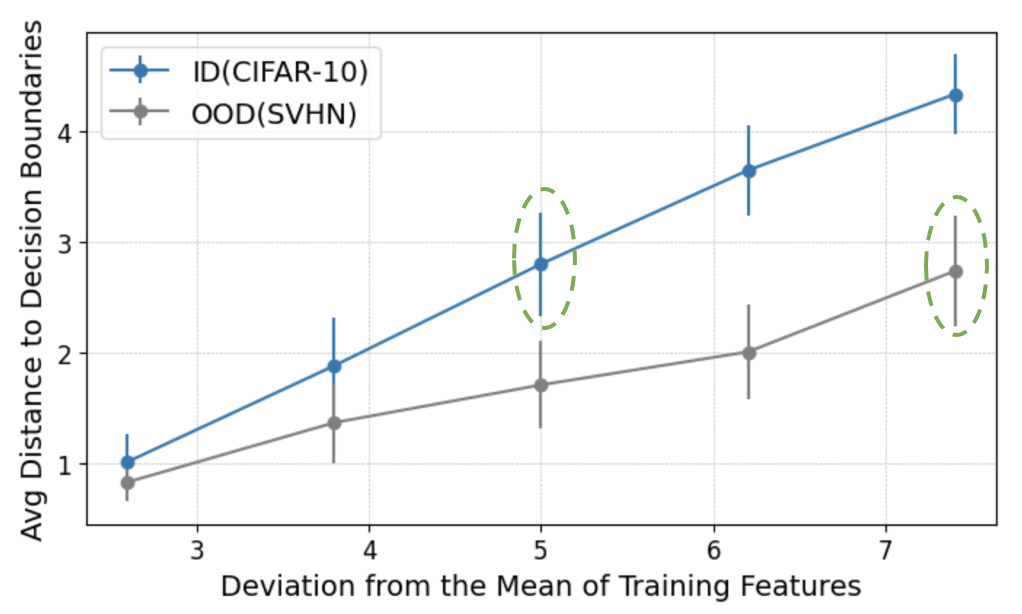}
\vspace{-5mm}
\end{center} 
\caption{
\textbf{ID and OOD are better separated at Equal Deviation Levels.} 
Features are grouped by deviation levels with group mean and variance displayed. Since the average feature distance to decision boundaries increases as features deviate from the mean of training features, 
the \emph{circled} ID/OOD groups cannot be distinguished based on their average distance to decision boundaries while being effectively separable at their own deviation levels. }
\label{fig:intuition}
\vspace{-2mm}
\end{figure}

\begin{table*}[t!]
\footnotesize
\caption{\textbf{fDBD achieves superior performance with negligible latency overhead on CIFAR-10 OOD benchmarks.}
Evaluated on ResNet-18 with FPR95, AUROC, and inference latency. 
$\uparrow$ indicates that larger values are better and vice versa.
Best performance highlighted in \textbf{bold}.
Methods with $*$ are hyperparameter-free.  
}
\vspace{-1.8mm}
\label{tab:cifar10-resnet}
\begin{center}
\includegraphics[width=0.96\textwidth]{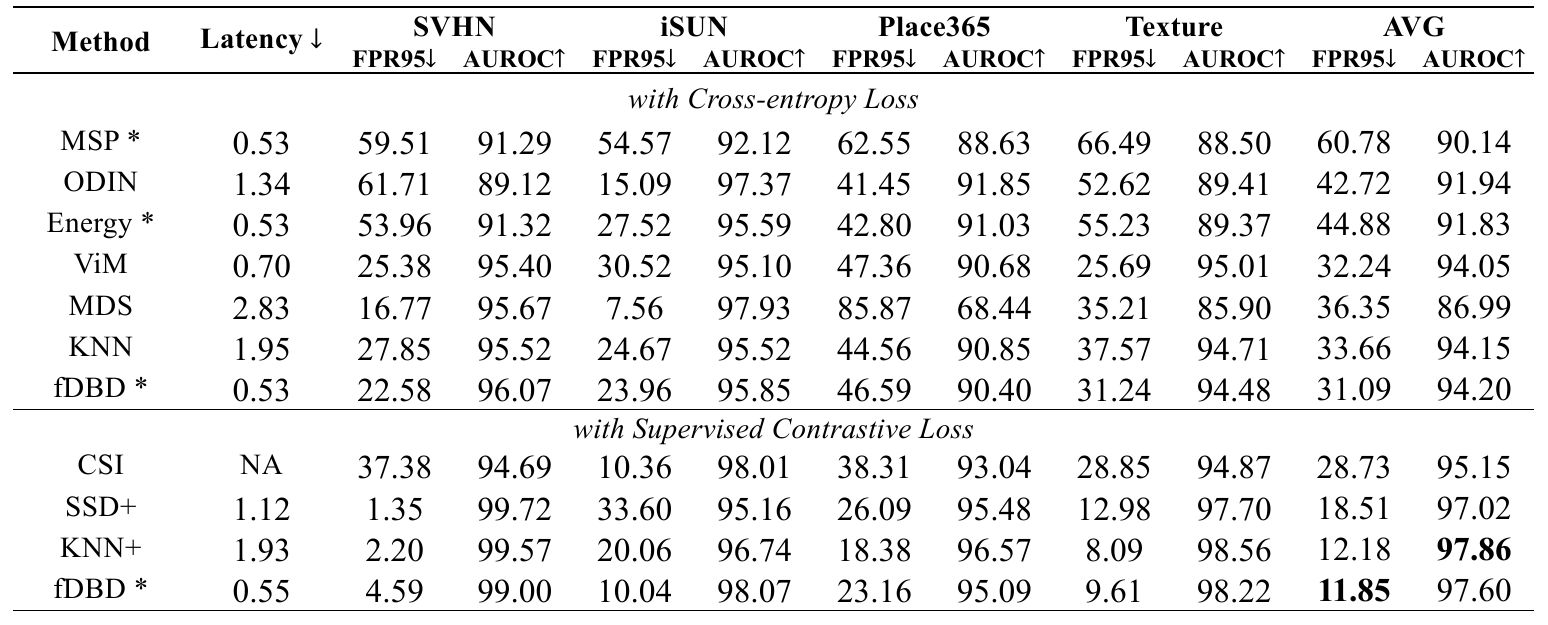}
\end{center}
\vspace{-3mm}
\end{table*}

It's worth noticing that our \fdbd~is \emph{hyperparameter-free} and \emph{auxiliary-model-free}.
In contrast to many existing approaches \cite{liang2018enhancing, lee2018simple, sun2022out}, our \fdbd~ eliminates the pre-inference cost of tuning hyper-parameter and the potential requirement for additional data.
Benefiting from our closed-form distance measuring method, \fdbd~is computationally efficient. 
Specifically, computing $\Tilde{D}_f(\bm{z_x}, c)$ takes constant time (Section~\ref{sec:dist-measure}) and computing $\|\bm{z_x} - \bm{\mu_G} \|_2$ in Equation~\ref{eq:regDBScore} has time complexity $O(P)$, where $P$ is the dimension of penultimate layer. 
Overall, \fdbd~has time complexity $O(|\mathcal{C}| + P)$, which scales linearly with the number of training classes $|\mathcal{C}|$ and the dimension $P$, indicating
computational scalability
for larger datasets and models. 
We will further demonstrate the efficiency of \fdbd~through experiments in Section~\ref{sec:experiments}.

\section{Experiments}\label{sec:experiments}

In this section, we demonstrate the superior efficiency and effectiveness of $\fdbd$ across OOD benchmarks.
We use two widely recognized metrics in the literature: the False Positive rate at $95\%$ true positive rate (FPR95) and the Area Under the Receiver Operating Characteristic Curve (AUROC). 
A lower FPR95 score indicates better performance, whereas a higher AUROC value indicates better performance. 
In addition, we report the per-image inference latency (in milliseconds) evaluated on a Tesla T4 GPU.
We refer readers to Appendix~\ref{app:imp-detail} for implementation details.

\subsection{Evaluation on CIFAR-10 Benchmarks}\label{sec:cifar}

In Table~\ref{tab:cifar10-resnet}\footnote{$\mathtt{CSI}$ results copied from Table 4 in \citet{sun2022out}.}, we present the evaluation of baselines and our \fdbd~across CIFAR-10 OOD benchmarks on ResNet-18.

\textbf{Training Schemes}
We evaluate OOD detection performance on a model trained under the standard cross-entropy loss, achieving an accuracy of $94.21\%$. 
Moreover, we experiment with a model whose representation mapping is trained using supervised contrastive loss (SupCon) \cite{khosla2020supervised}. 
With a linear classifier trained on top of the representation mapping, the model achieves an accuracy of 94.64\%. 
We note that classifiers trained with SupCon loss reach competitive accuracy,
making them essential for real-world deployment and highlighting the importance of studying OOD detection performance on such models.
As shown by \citet{sun2022out}, clustering-based OOD detectors excel for models trained with SupCon loss.
Thus, we aim to assess if \fdbd~can achieve state-of-art performance in such competitive scenarios.

\textbf{Datasets} 
On the CIFAR-10 OOD benchmark, we use the standard CIFAR-10 test set with 10,000 images as ID test samples. 
For OOD samples, we consider common OOD benchmarks: SVHN~\cite{netzer2011reading}, iSUN~\cite{xu2015turkergaze}, Places365~\cite{zhou2017places}, and Texture~\cite{cimpoi2014describing}. 
All images are of size $32 \times 32$.

\textbf{Baselines}
We compare our method with six baseline methods on the model trained with standard cross-entropy loss. 
In particular, MSP \cite{hendrycks2016baseline}, ODIN \cite{liang2018enhancing}, and Energy \cite{liu2020energy} design OOD score functions on the model output. 
Conversely, MDS \cite{lee2018simple} and KNN \cite{sun2022out} utilize the clustering of ID samples in the feature space and build auxiliary models for OOD detection.
ViM \cite{wang2022vim} combines feature null space information with the output space Energy score. 
In addition, we consider four baseline methods particularly competitive under contrastive loss, $\mathtt{CSI}$~\cite{tack2020csi}, $\mathtt{SSD+}$~\cite{sehwag2020ssd}, and $\mathtt{KNN+}$.
All four methods utilize feature space clustering through building auxiliary models. 
Our method, \fdbd, is training-agnostic and applicable across training schemes. 
We eliminate auxiliary models and incorporate class-specific information from the decision boundaries perspective.
Notably, \fdbd, MSP, and Energy are hyper-parameter free, while the other baselines require hyper-parameter fine-tuning.

\begin{table*}[t]
\footnotesize
\caption{\textbf{fDBD achieves superior performance with negligible latency overhead on ImageNet OOD benchmark.}
Evaluated on ResNet-50 with FPR95, AUROC, and inference latency. 
$\uparrow$ indicates that larger values are better and vice versa.
Best performance highlighted in \textbf{bold}.
Methods with $*$ are hyperparameter-free. 
}
\vspace{-1.8mm}
\label{tab:imagenet}
\begin{center}
\includegraphics[width=0.98\textwidth]{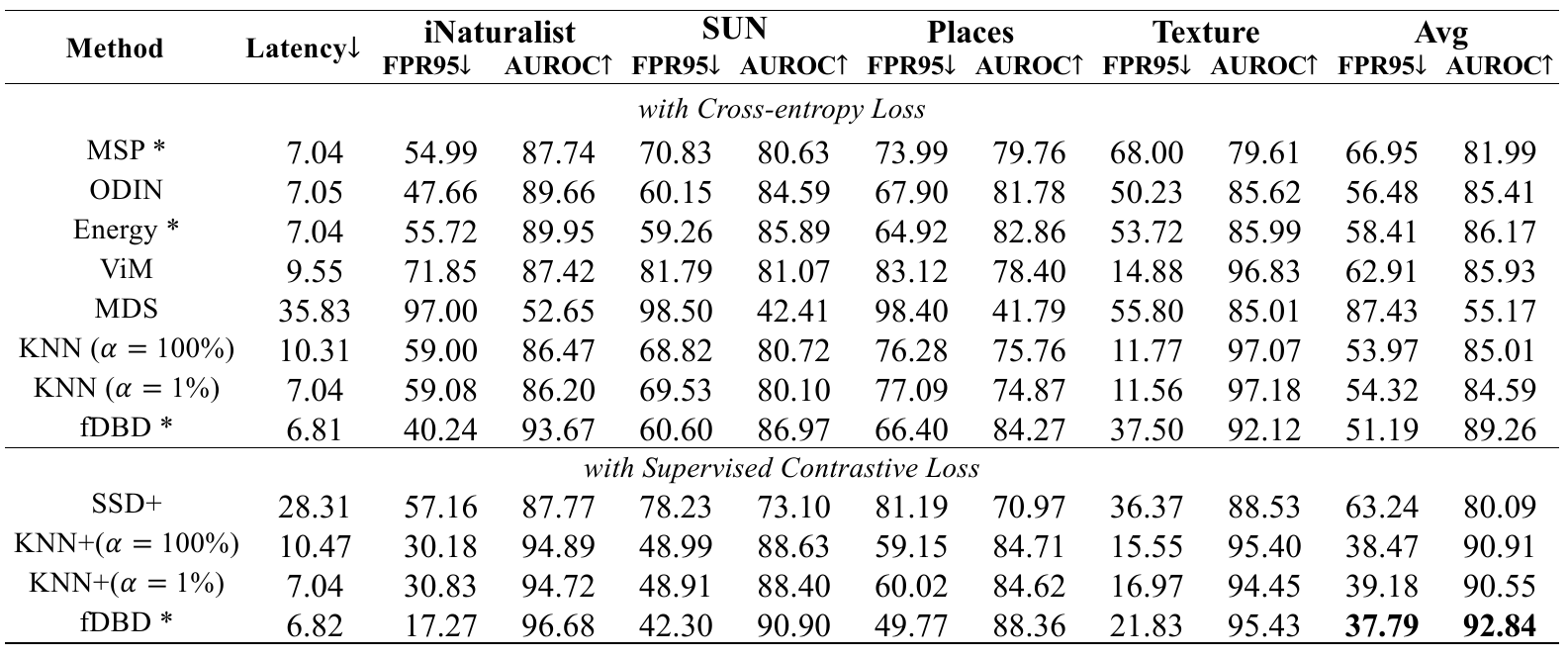}
\end{center}
\vspace{-2mm}
\end{table*}

\textbf{OOD Detection Performance}  
In Table~\ref{tab:cifar10-resnet}, we compare \fdbd~with the baselines. 
Overall, \fdbd~achieves state-of-art performance in terms of FPR95 and AUROC scores across training schemes. 
In addition, thanks to our efficient distance estimation method in Section~\ref{sec:dist-measure}, \fdbd~has minimal computational overhead:
the original classifier takes 0.53 milliseconds per image, and with \fdbd, the processing time remains the same. 
Furthermore, we observe that OOD detection significantly improves under contrastive learning. 
This aligns with the study by \citet{sun2022out}, showing that contrastive learning better separates ID and OOD features.

We highlight three groups of comparisons: 
\begin{itemize}[left=0pt]
    \item \textbf{\fdbd~v.s. MSP / Energy}: 
    All three methods are hyperparameter-free and detect OOD based on model uncertainty:
    MSP and Energy use softmax confidence and Energy score in the output space, respectively, whereas \fdbd~utilizes the feature-space distance \emph{w.r.t.} decision boundaries.
    Looking into the performance in Table~\ref{tab:cifar10-resnet}, on the model trained with cross-entropy loss, our \fdbd~reduces the average FPR95 of MSP by 29.69\%, which is a relatively \textbf{48.85\%} reduction in error. 
    Additionally, \fdbd~reduces the average FPR95 of Energy by 13.78\%,  resulting in a relatively \textbf{30.73\%} reduction in error.
    The substantial improvement aligns with our intuition that the feature space contains crucial information for OOD detection, which we leverage in both our uncertainty metric and our regularization scheme.
    \item \textbf{\fdbd~v.s. KNN}
    We benchmark against KNN under the same hyperparameter setup in \citet{sun2022out}, using $k = 50$ nearest neighbors across the entire training set. 
    While both \fdbd~and KNN achieve superior detection effectiveness on CIFAR-10 OOD benchmark, KNN reports an average inference time of $1.93ms$ per image, inducing a noticeable overhead in comparison to \fdbd~due to the use of the auxiliary model. 
    In addition, \fdbd~significantly outperforms KNN on ImageNet OOD benchmark in Table~\ref{tab:imagenet}, highlighting the benefit of incorporating the class-specific information from the class decision boundary perspective. 
    \item \textbf{\fdbd~v.s.ViM} 
    \fdbd~and ViM \cite{wang2022vim} both integrate class-specific information into feature space representation.
    Specifically, ViM algebraically adds the output space energy score to the feature null space score. 
    Due to the use of null space, ViM requires expensive matrix multiplication during inference, resulting in a noticeable latency increase of $0.70ms$ compared to \fdbd.
    Moreover, \fdbd~outperforms ViM, especially on ImageNet OOD benchmark in Table~\ref{tab:imagenet}.
    This suggests the effectiveness of our geometrically motivated integration of class-specific information from the perspective of feature-space class decision boundaries, compared to simplly algebraically adding output-space scores to feature-space scores, as done in ViM.
\vspace{-2mm}
\end{itemize}


\vspace{-2mm}

\begin{table*}
\vspace{-3mm}
\caption{\textbf{fDBD achieves competitive performance on ViT-B/16 model fine-tuned on ImageNet-1k.} 
Evaluated under AUROC. 
Best performance highlighted in \textbf{bold}.
}
\vspace{-1mm}
\label{tab:vit}
\begin{center}
\vspace{-1mm}
\includegraphics[width=0.75\textwidth]{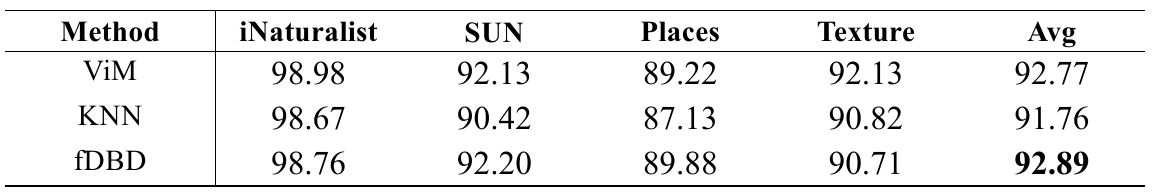}
\end{center}
\vspace{-4mm}
\end{table*}
\begin{table*}[t]
\caption{fDBD is compatible with activation shaping algorithms ReAct, ASH, and Scale.
Evaluated under AUROC and FPR95 on ImageNet OOD Benchmark. 
Best performance highlighted in \textbf{bold}.
}
\label{tab:w_react}
\footnotesize
\centering
\includegraphics[width=0.94\textwidth]{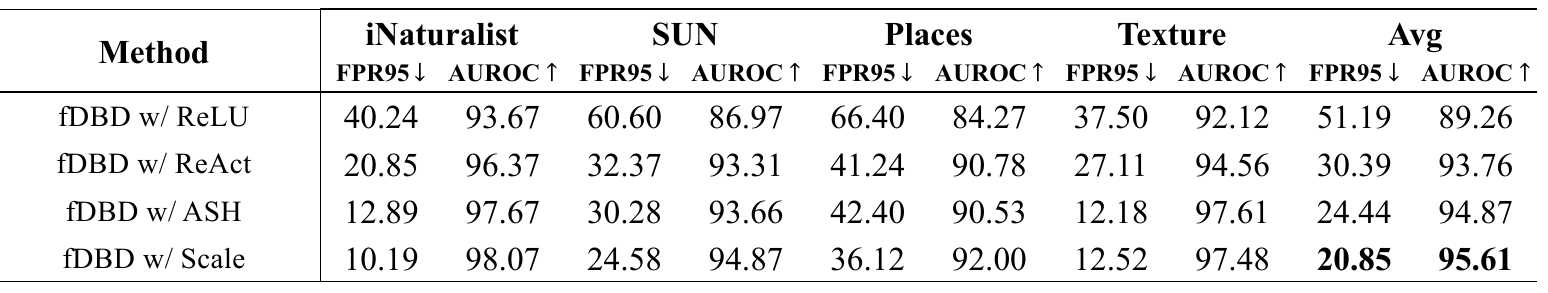}
\vspace{-1mm}
\end{table*}
\subsection{Evaluation on ImageNet Benchmarks}\label{sec:imagenet}

In Table~\ref{tab:imagenet}
\footnote{
Results in Table~\ref{tab:imagenet} except ours and ViM are from Table 4 by \citet{sun2022out}.
MDS here refers to Mahalanobis there. 
Following the reference table, we exclude CSI, since \citet{sun2022out} note that training of CSI on ImageNet is notably resource-intensive, requiring three months on 8 Nvidia 2080Tis. 
}, we further compare the efficiency and effectiveness of our \fdbd~and baselines on larger scale ImageNet OOD Benchmarks on ResNet-50. 

\noindent\textbf{Training Schemes $\&$ Datasets $\&$ Baselines}
We consider the training schemes discussed in Section~\ref{sec:cifar} and examine models trained with cross-entropy loss and supervised contrastive loss. 
The ResNet-50 trained under cross-entropy loss achieves an accuracy of 76.65\% and the ResNet-50 trained under supervised contrastive loss achieves an accuracy of 77.30\%.

We use 50,000 ImageNet validation images in the standard split as ID test samples. Following~\citet{huang2021mos} and \citet{sun2022out},
we remove classes in Texture, Places365~\cite{zhou2017places}, iNaturalist~\cite{van2018inaturalist}, SUN~\cite{xiao2010sun} that overlap with ImageNet and use the remaining datasets as OOD samples. 
All images are of size $224 \times 224$.

We compare to the same baselines in Section~\ref{sec:cifar} except for CSI.
For KNN, we consider two sets of hyper-parameters reported in the original paper \cite{sun2022out}: $\alpha = 100\%$ refers to searching through all training data for $k = 1000$ nearest neighbors;  $\alpha = 1\%$ refers to searching through sampled $1\%$ of training data for $10$ nearest neighbors. 

\noindent\textbf{OOD Detection Performance}
Table~\ref{tab:imagenet} shows that \fdbd~outperforms all baselines in both average FPR95 and average AUROC on ImageNet OOD benchmarks.
This demonstrates \fdbd~consistently maintains its superior effectiveness in OOD detection on large-scale datasets. 
In addition, \fdbd~remains computationally efficient for ImageNet OOD detection. 
This aligns with our observation on CIFAR-10 benchmarks and supports our analysis that \fdbd~scales linearly with the class number and the dimension, ensuring manageable computation for large models and datasets. 

\subsection{Evaluation on Alternative Architectures}

To examine the generalizability of our proposed method beyond ResNet, we further experiment with transformer-based ViT model \cite{dosovitskiy2020image} and DenseNet~\citep{huang2017densely}. 
In Table~\ref{tab:vit}, we evaluate our~\fdbd, as well as strong competitors ViM and KNN on a ViT-B/16 fine-tuned with ImageNet-1k using cross-entropy loss.
The classifier achieves an accuracy of 81.14\%. 
We consider the same OOD test sets as in Section~\ref{sec:imagenet} for Imagenet. 
In Appendix~\ref{sec:densenet}, we extend our experiments to DenseNet. 
The performance on ViT and DenseNet demonstrates the effectiveness of \fdbd~across different network architectures.


\subsection{Evaluation under Activation Shaping}
Orthogonal to the effort of designing standalone detection scores, \citet{sun2021react, djurisic2022extremely} and \citet{xu2023scaling} propose to shape the feature activation to improve ID/OOD separation. 
The proposed algorithms, ReAct \citep{sun2021react}, ASH \citep{djurisic2022extremely}, and Scale \citep{xu2023scaling}, serve as alternative operations to the standard ReLU activation in our experiments so far.
With proper hyper-parameter selection, such algorithms have been shown to enhance the performance of standalone scores such as Energy, as detailed in Appendix~\ref{app:baseline}. 
As a hyperparameter-free method, our \fdbd~can be seamlessly combined with hyperparameter-dependent activation shaping algorithms without intricate tuning interactions.
In Table~\ref{tab:w_react}, we compare \fdbd~performance under standard ReLU activation and under activation shaping algorithms ReAct, ASH, and Scale. 
Specifically, we evaluate ImageNet OOD Benchmarks on a ResNet-50 trained under cross-entropy loss following detailed setups in Section~\ref{sec:imagenet}. 
For hyperparameter selection, we adhere to the original papers and set the percentile values to 80, 90, 90 for ReAct, ASH, and Scale, respectively. 
With activation shaping applied both to test features and the mean of training feature in Equation~\ref{eq:regDBScore}, we observe improved performance across OOD datasets, validating the compatibility of \fdbd~with ReAct, ASH, and Scale. 
We remark that \fdbd~with Scale achieves the state-of-art performance on this benchmark, comparable to Energy with Scale, as detailed in  Appendix~\ref{sec:react_full}.


\subsection{Ablation Study}\label{sec:ablation}

\subsubsection{Effect of Regularization}

Previously, we illustrate in Figure~\ref{fig:regularization} that regularization enhances the ID/OOD separation under the metric of feature distances to decision boundaries.
We now quantitatively study the regularization effect.
Specifically, we compare the performance of OOD detection using the regularized average distance $\mathtt{regDistDB}$, the regularization term $\|z - \mu_{train} \|_2$, and the un-regularized average distance 
$$ \mathtt{avgDistDB} \coloneqq \|z - \mu_{train} \|_2 \ \mathtt{regDistDB} $$
as detection scores respectively.
Experiments are conducted on a ResNet-50 trained under cross-entropy loss following detailed setups in Section~\ref{sec:imagenet}.
We report the performance in AUROC scores in Table~\ref{tab:ablation-reg} and FPR95 in Appendix~\ref{app:fpr95_ablation}.
Aligning with Figure~\ref{fig:intuition}, $\|z - \mu_{train} \|_2$ alone does not necessarily distinguish between ID and OOD samples, as indicated by AUROC scores around 50. 
However, regularization with respect to $\|z - \mu_{train} \|_2$ enhances ID/OOD separation. 
Consequently, $\mathtt{regDistDB}$ improves over $\mathtt{avgDistDB}$ and achieves higher AUROC, as shown in Table~\ref{tab:ablation-reg}.   
This supports our intuition in Section~\ref{sec:method} to compare ID/OOD at equal deviation levels through regulirization. 
We further theoretically explain the observed enhancement in Appendix~\ref{app: regularization}. 

\begin{table}
\vspace{-2mm}
\caption{\textbf{Regularization enhances the effectiveness of OOD detection.} 
AUROC scores reported on ImageNet Benchmarks (higher is better). $\mathtt{regDistDB}$ outperforms $\mathtt{avgDistDB}$.
}
\label{tab:ablation-reg}
\footnotesize
\centering
\includegraphics[width=\columnwidth]{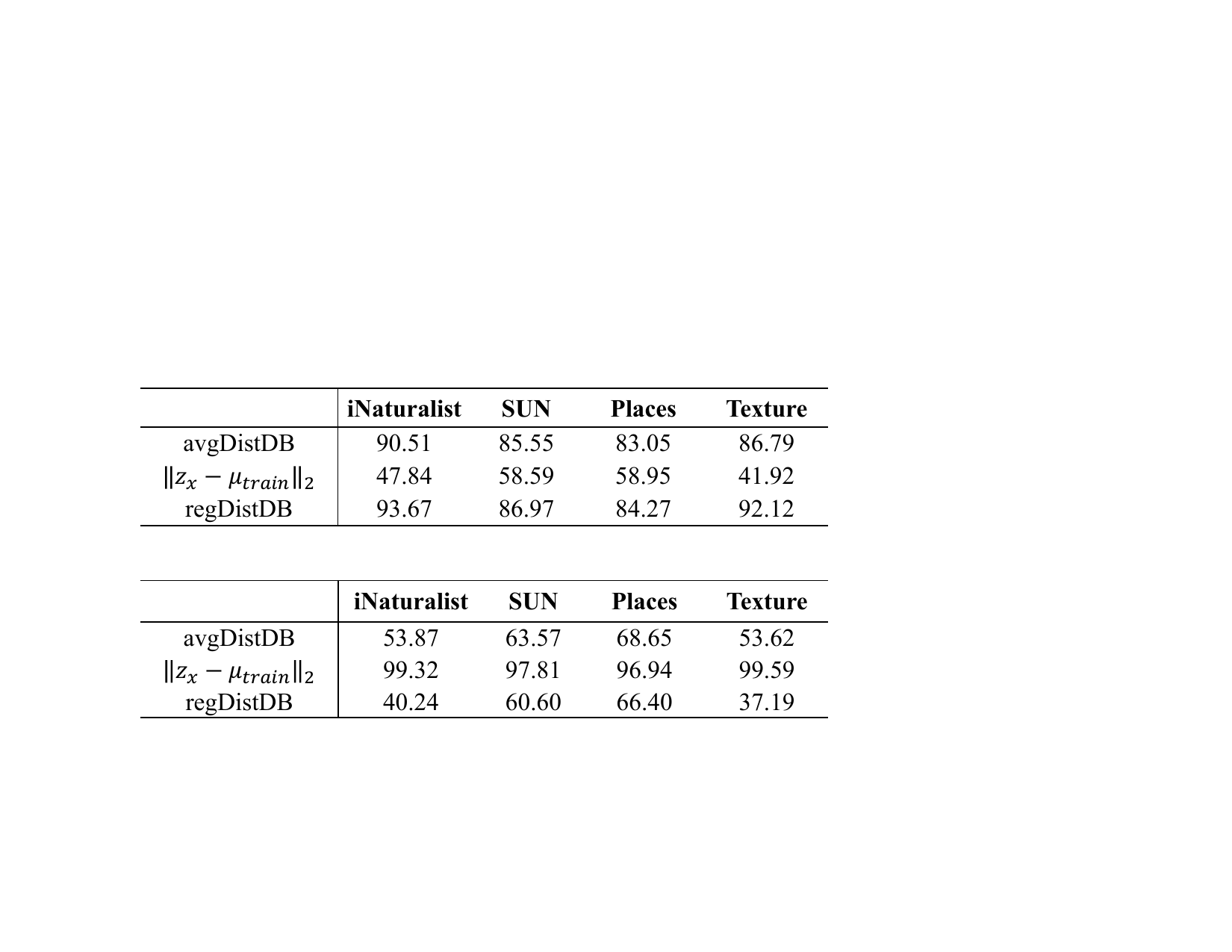}
\vspace{-2mm}
\end{table}

\subsubsection{Effect of Individual Distances}
For \fdbd, we design the detection score as the feature distances to the decision boundaries, averaged over \emph{all} unpredicted classes. 
Notably, \fdbd~operates as a hyperparameter-free method, and we do \textbf{not} tune the number of distances in our experiments. 
Nevertheless, we perform an ablation study to understand the effect of individual distances. 

To align across samples predicted as different classes, we sort per sample the feature distances to decision boundaries.
We then detect OOD using the average of top-$k$ smallest distance values. 
Specifically, $k=1$ corresponds to the detection score being the ratio between the feature distance to the closest decision boundary and the feature distance to the mean of training features. 
And $k=9$ on CIFAR-10 and $k=999$ on ImageNet recover our detection score $\mathtt{regDistDB}$ (see Eqn.~\eqref{eq:regDBScore}), where we average over all distances for OOD detection. 

We experiment with CIFAR-10 and ImageNet benchmarks on ResNets trained with cross-entropy loss, following the setups in Section~\ref{sec:cifar} and Section~\ref{sec:imagenet}. 
In Figure~\ref{fig:ablation_k}, we present the average FPR95 and AUROC score across OOD datasets, using $k$ distances for detection. 
Looking into Figure~\ref{fig:ablation_k}, the performance improves with increasing number of $k$.
This justifies our design of \fdbd~as a hyper-parameter-free method, utilizing all distances for OOD detection.  

\begin{figure}
\begin{center}
\includegraphics[width=0.95\columnwidth]{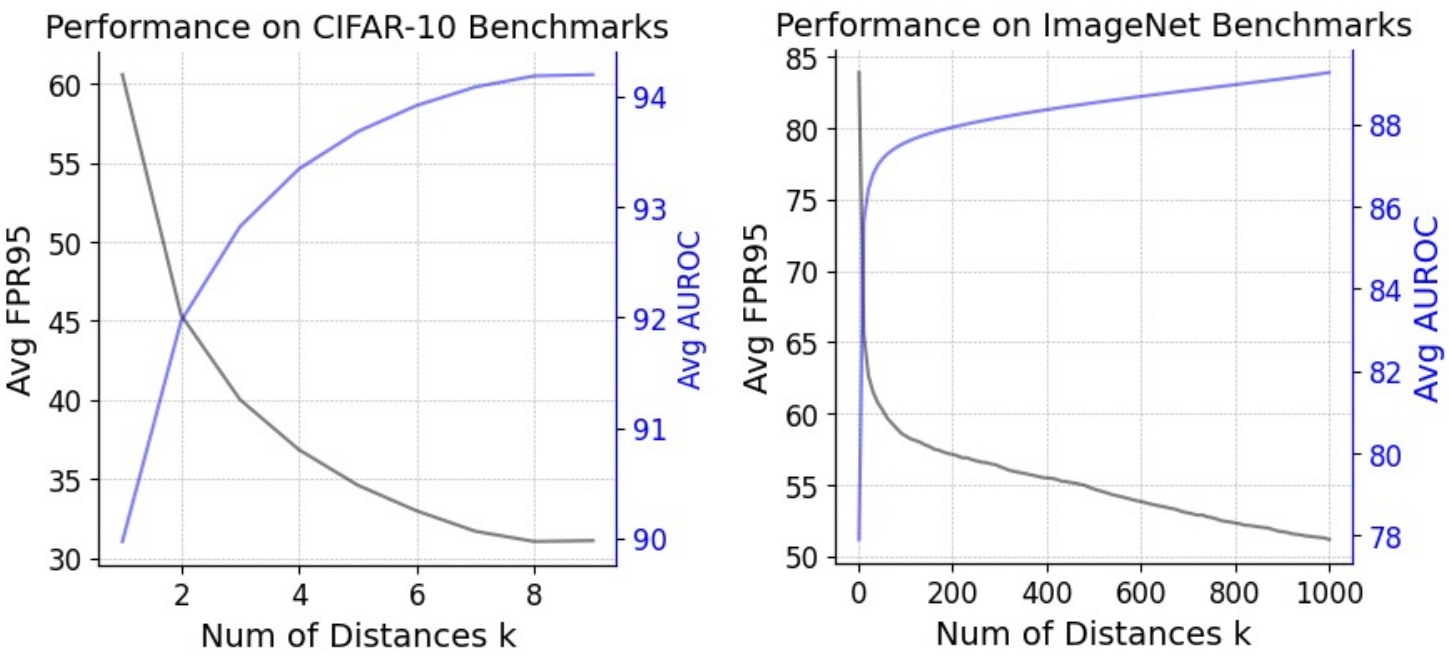}
\end{center} 
\vspace{-4mm}
\caption{
\textbf{Ablation on Individual Distances.} 
\textit{Left:} CIFAR-10 Benchmark performance improves with an increasing number of distances.
\textit{Right:} ImageNet Benchmark performance improves with an increasing number of distances.
The performance supports the use of all distances in our hyperparameter-free \fdbd.
}
\label{fig:ablation_k}
\end{figure}

\section{Theoretical Analysis}\label{sec:justification}

In this section, we give theoretical analysis to shed light on our observation and algorithm design in Section~\ref{sec:method}.

\vspace{-3mm}
\paragraph{Setups}
We consider a general classifier for a class set $\mathcal{C}$ with a penultimate layer of dimension $P$.
Following \citet{lee2018simple}, we model the ID feature distribution as a Gaussian mixture.
Specifically, we consider $|\mathcal{C}|$ equally-weighted components, where each component corresponds to a class $i \in \mathcal{C}$ and follows a Gaussian distribution $N(\bm{\mu}_i, \sigma^2 \bm{I})$, where $\bm{I}$ is the identity matrix. 
Without loss of generality, we assume the distribution is zero-centered, i.e. $\bm{\mu} \doteq \frac{1}{|\mathcal{C}|} \sum_{i \in \mathcal{C}} \bm{\mu}_i = \bm{0}$.
Following the empirical observation by \citet{papyan2020prevalence}, we model the geometry of class means $\{ \mu_i \}$ as a simplex Equiangular Tight Framework (ETF): 
$$  \| \bm{\mu}_i \|_2 =  \| \bm{\mu}_j \|_2 \ \ \forall i, j, $$ 
$$\< \frac{\bm{\mu}_i}{\|\bm{\mu}_i\|_2 },  \frac{\bm{\mu}_j }{\|\bm{\mu}_j \|_2 } \>   =  \frac{|\mathcal{C}|}{|\mathcal{C}|-1} \delta_{i,j} - \frac{1}{|\mathcal{C}| -1},$$
where $\delta_{ij}$ is the Kronecker delta symbol.

Under the modeling, the optimal decision region of class $i$ can be defined as:
$$ \mathcal{V}_i \doteq \{ \bm{z}: \<\bm{\mu}_i,\bm{z}\> \geq \max_{j\neq i}\<\bm{\mu}_j, \bm{z}\>\}.$$
Correspondingly, the decision boundary between class $i$ and $j$ is:
$$ \mathcal{S}_{ij} \doteq \{\bm{z}: \<\bm{\mu}_i,\bm{z}\>=\<\bm{\mu}_j,\bm{z}\>\geq \max_{k\neq i,j}\<\bm{\mu}_k,\bm{z}\> \}. $$
For any $\bm{z} \in \mathcal{V}_i$, the distance from $\bm{z}$ to the decision boundary between class $i$ and $j$, $\mathcal{S}_{ij}$, is the length of the projection of $\bm{z}$ onto the norm vector of $\mathcal{S}_{ij}$: 
$$ d(\bm{z}, \mathcal{S}_{ij}) \doteq \frac{ \< \bm{z}, \bm{\mu}_i - \bm{\mu}_j \>}{\| \bm{\mu}_i - \bm{\mu}_j \|}. $$
For simplicity of notation, we denote the union of decision boundaries as $\mathcal{S} = \cup \mathcal{S}_{ij}$. 
Additionally, we denote the distance from $\bm{z}$ to its closest decision boundary as $d(\bm{z}, \mathcal{S})$.

Following \citet{sun2022out}, we assume that OOD features reside outside the dense region of ID feature distribution. 
We define this dense region as the area within two standard deviations from each class mean: $$\mathcal{I} \doteq \cup_i \mathcal{I}_i = \cup_i \{\bm{z}: \|\bm{z} - \bm{\mu}_i\| \leq 2\sigma\}. $$ 
We also assume that ID features are well-separated, so that the dense region of each class is entirely within its decision region, i.e., $\mathcal{I}_i \subset \mathcal{V}_i$. 

\paragraph{Main Result}
Recall from Section~\ref{sec:method} that we observe the feature distance to decision boundaries increases as features deviate from the mean of training features $\bm{\mu}_{train}$.
This observation motivates us to compare ID and OOD features at equal deviation levels and design our detection algorithm accordingly. 
Note that $\bm{\mu}_{train}$ is an empirical estimation of $\bm{\mu}$, the mean of ID feature distribution.

To further understand our observation, we present Proposition~\ref{prop:prop1}, which demonstrates that the feature distance to the decision boundary $d(\mathcal{S}, z)$ increases as $z$ deviates from $\bm{\mu}$.
Additionally, we validate our detection algorithm in Proposition~\ref{prop:prop2}, showing that, at equal deviation levels, ID features tend to be further from the decision boundary compared to OOD features.
We present the complete proofs in Appendix~\ref{app:proof}. 

As discussed in Setups, we assume without loss of generality that the features are zero-centered, i.e., $\bm{\mu} = \bm{0}$. 

\begin{proposition}\label{prop:prop1}
Consider the set of features of equal distance $r$ to the ID distribution mean $\mathcal{E}_r \doteq \{\bm{z}: \|\bm{z} - \bm{\mu}\| = \|\bm{z} \| = r \}$.
For any $r_0 < r_1$, we have: 

\vspace{-2mm}
\begin{equation}
{
\begin{aligned}
    & \frac{1}{Vol(\mathcal{E}_{r_0})}\int_{z \in \mathcal{E}_{r_0}} d(z, \mathcal{S}) \, \mathrm{d(z)} \\
    &  <  \frac{1}{Vol(\mathcal{E}_{r_1})}\int_{z \in \mathcal{E}_{r_1}} d(z, \mathcal{S}) \, \mathrm{d(z)}.  
\end{aligned} }
\end{equation}
\end{proposition}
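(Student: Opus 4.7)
My plan is to reduce the proposition to a one-line scaling argument by exploiting the fact that, under the simplex ETF geometry with zero-centered class means, every decision boundary is a cone through the origin. Since the class means sum to zero and every (in)equality defining $\mathcal{S}_{ij}$ and $\mathcal{V}_i$ is linear and homogeneous in $\bm{z}$, both $\mathcal{V}_i$ and $\mathcal{S}_{ij}$ are positively homogeneous: $\bm{z}\in\mathcal{V}_i \iff \lambda\bm{z}\in\mathcal{V}_i$ for every $\lambda>0$, and similarly for $\mathcal{S}_{ij}$. The per-boundary distance formula $d(\bm{z},\mathcal{S}_{ij})=\langle \bm{z},\bm{\mu}_i-\bm{\mu}_j\rangle/\|\bm{\mu}_i-\bm{\mu}_j\|$ from the setup is linear in $\bm{z}$, and taking the minimum over $j\neq i$ preserves positive homogeneity of degree one, yielding $d(\lambda\bm{z},\mathcal{S})=\lambda\,d(\bm{z},\mathcal{S})$ for all $\lambda>0$.

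Next, I would parametrize $\mathcal{E}_r$ as $\bm{z}=r\hat{\bm{z}}$ with $\hat{\bm{z}}\in\mathcal{E}_1$. The induced surface-area element satisfies $dS_{\mathcal{E}_r}(\bm{z})=r^{P-1}\,dS_{\mathcal{E}_1}(\hat{\bm{z}})$, so the factor $r^{P-1}$ cancels between $\int_{\mathcal{E}_r}$ and $Vol(\mathcal{E}_r)=r^{P-1}Vol(\mathcal{E}_1)$. Combining this cancellation with the identity $d(r\hat{\bm{z}},\mathcal{S})=r\,d(\hat{\bm{z}},\mathcal{S})$ from the previous step, the averaged integral collapses to
\[
\frac{1}{Vol(\mathcal{E}_r)}\int_{\mathcal{E}_r} d(\bm{z},\mathcal{S})\,dS(\bm{z}) \;=\; r\cdot C,\qquad C \doteq \frac{1}{Vol(\mathcal{E}_1)}\int_{\mathcal{E}_1} d(\hat{\bm{z}},\mathcal{S})\,dS(\hat{\bm{z}}),
\]
which is strictly linear in $r$. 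Since $r_0<r_1$ and $C>0$ (the integrand is nonnegative and strictly positive off a measure-zero subset of the sphere, because $\mathcal{S}$ is a finite union of $(P{-}1)$-dimensional cones), the desired strict inequality follows immediately.

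In short, there is no substantial obstacle: the proposition is essentially the statement that averaging a nonnegative, degree-one positively homogeneous function over a sphere of radius $r$ produces a quantity proportional to $r$. The only two places that need genuine care are (i) justifying that the class-wise minimum of the linear functionals $d(\,\cdot\,,\mathcal{S}_{ij})$ really yields a positively homogeneous function of $\bm{z}$, and (ii) the routine Jacobian bookkeeping that cancels the $r^{P-1}$ sphere-area scaling. Both become mechanical once the conical structure of the decision regions is made explicit at the start, which is why I would establish homogeneity before touching the integral.
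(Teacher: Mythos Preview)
Your proposal is correct and follows essentially the same approach as the paper: both arguments hinge on the positive homogeneity of degree one of $d(\bm{z},\mathcal{S})$, which in turn comes from the fact that the decision regions $\mathcal{V}_i^j$ and boundaries $\mathcal{S}_{ij}$ are cones through the origin. The paper phrases this as a pointwise bijection $\bm{z}_0\mapsto (r_1/r_0)\bm{z}_0$ between $\mathcal{E}_{r_0}$ and $\mathcal{E}_{r_1}$ and then passes informally from the pointwise inequality to the inequality of means; your version is slightly more explicit in that you carry out the change of variables with the $r^{P-1}$ Jacobian and obtain the closed form $r\cdot C$ directly, which also makes the strictness (via $C>0$) transparent.
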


\vspace{0.5mm}
\begin{proposition}\label{prop:prop2}
Consider ID and OOD features of equal distance $r$ to the ID distribution mean, where $\sigma < r < 5\sigma$.
For ID region, $\mathcal{I} \cap \mathcal{E}_r$, and OOD region, $\mathcal{I}^\complement \cap \mathcal{E}_r$, we have 

\vspace{-4mm}
\begin{equation}
{
\begin{aligned}
    & \frac{1}{Vol(\mathcal{I} \cap \mathcal{E}_r)}\int_{z \in \mathcal{I} \cap \mathcal{E}_r} d(z, \mathcal{S}) \, \mathrm{d(z)} \\
    &  > \frac{1}{Vol(\mathcal{I}^\complement \cap \mathcal{E}_r)} \int_{z \in \mathcal{I}^\complement \cap \mathcal{E}_r} d(z, \mathcal{S}) \, \mathrm{d(z)}.
\end{aligned}}
\end{equation}
\end{proposition}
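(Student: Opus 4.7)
The plan is to exploit the permutation symmetry of the simplex ETF to reduce the claim to a single Voronoi cell, and then establish the inequality via a slicing argument along the class-mean axis $\hat{\bm{\mu}}_i := \bm{\mu}_i/\|\bm{\mu}_i\|$. First, the symmetric group $S_{|\mathcal{C}|}$ acts orthogonally on the ambient space by permuting $\{\bm{\mu}_i\}$, preserves $\mathcal{E}_r$, and leaves $d(\cdot, \mathcal{S})$ invariant. Since $\mathcal{I}_i \subset \mathcal{V}_i$ and the Voronoi cells $\mathcal{V}_i$ are disjoint up to measure zero, this symmetry gives
\begin{equation*}
\frac{\int_{\mathcal{I} \cap \mathcal{E}_r} d \, \mathrm{d}z}{Vol(\mathcal{I} \cap \mathcal{E}_r)} = \frac{\int_{\mathcal{I}_i \cap \mathcal{E}_r} d \, \mathrm{d}z}{Vol(\mathcal{I}_i \cap \mathcal{E}_r)}, \qquad \frac{\int_{\mathcal{I}^\complement \cap \mathcal{E}_r} d \, \mathrm{d}z}{Vol(\mathcal{I}^\complement \cap \mathcal{E}_r)} = \frac{\int_{(\mathcal{V}_i \setminus \mathcal{I}_i) \cap \mathcal{E}_r} d \, \mathrm{d}z}{Vol((\mathcal{V}_i \setminus \mathcal{I}_i) \cap \mathcal{E}_r)},
\end{equation*}
reducing the proposition to comparing these two single-cell averages for any fixed $i$.

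Next, I would decompose $\bm{z} = \alpha \hat{\bm{\mu}}_i + \bm{z}_\perp$ with $\bm{z}_\perp \perp \bm{\mu}_i$, so $\|\bm{z}_\perp\|^2 = r^2 - \alpha^2$ on $\mathcal{E}_r$. The law of cosines turns $\|\bm{z} - \bm{\mu}_i\| \leq 2\sigma$ into $\alpha \geq \alpha^* := (r^2 + M^2 - 4\sigma^2)/(2M)$, where $M := \|\bm{\mu}_i\|$, so within $\mathcal{V}_i \cap \mathcal{E}_r$ the ID slice is $\{\alpha \geq \alpha^*\}$ and the OOD slice is $\{\alpha < \alpha^*\}$. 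Using the ETF identity $\langle \bm{\mu}_i, \bm{\mu}_j\rangle = -M^2/(|\mathcal{C}|-1)$ together with the fact that $\|\bm{\mu}_i - \bm{\mu}_j\|$ is constant across $j \neq i$, the distance admits the closed form
\begin{equation*}
d(\bm{z}, \mathcal{S}) = \frac{\alpha M \cdot \tfrac{|\mathcal{C}|}{|\mathcal{C}|-1} - \max_{j\neq i}\langle \bm{z}_\perp, \bm{\mu}_{j,\perp}\rangle}{\|\bm{\mu}_i - \bm{\mu}_j\|}.
\end{equation*}
Let $g(\alpha)$ denote the conditional average of $d(\bm{z}, \mathcal{S})$ over the transverse slice $\mathcal{V}_i \cap \{\langle \bm{z}, \hat{\bm{\mu}}_i\rangle = \alpha,\, \|\bm{z}\| = r\}$. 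By the $S_{|\mathcal{C}|-1}$-symmetry of the projected simplex $\{\bm{\mu}_{j,\perp}\}_{j\neq i}$, the value $g(\alpha)$ depends only on $\alpha$ (and $r$). I would then show $g$ is strictly increasing in $\alpha$: the first term grows linearly in $\alpha$, while the expected max term decreases because the transverse radius $\sqrt{r^2-\alpha^2}$ shrinks.

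Finally, applying Fubini to express both ID and OOD averages as weighted means of $g(\alpha)$ over the disjoint intervals $\{\alpha \geq \alpha^*\}$ and $\{\alpha < \alpha^*\}$, strict monotonicity of $g$ gives
\begin{equation*}
\mathrm{Avg}_{\alpha < \alpha^*}\, g(\alpha) \;<\; g(\alpha^*) \;<\; \mathrm{Avg}_{\alpha \geq \alpha^*}\, g(\alpha),
\end{equation*}
which is precisely the desired inequality. The main obstacle is this monotonicity step: one must control $\mathbb{E}\!\left[\max_{j\neq i}\langle \bm{z}_\perp, \bm{\mu}_{j,\perp}\rangle\right]$ on a domain, the $\mathcal{V}_i$-restricted sphere of radius $\sqrt{r^2-\alpha^2}$, whose shape itself varies with $\alpha$. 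As $\alpha$ grows the sphere shrinks (pushing the expected max down) while the $\mathcal{V}_i$-constraint $\max_j\langle \bm{z}_\perp, \bm{\mu}_{j,\perp}\rangle \leq \alpha M|\mathcal{C}|/(|\mathcal{C}|-1)$ relaxes (allowing slightly larger configurations), and these two effects must be weighed carefully. The hypothesis $\sigma < r < 5\sigma$ is what keeps $\alpha^*$ strictly interior to the admissible range of $\alpha$ and places the analysis in the regime where the shrinkage effect dominates.
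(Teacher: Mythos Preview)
Your symmetry reduction to a single cell $\mathcal{V}_i$, the $(\alpha,\bm{z}_\perp)$ decomposition, the law-of-cosines threshold $\alpha^*$, and the closed form for $d(\bm{z},\mathcal{S})$ are all correct. The gap is exactly where you locate it: you need the slice-average $g(\alpha)$ to be strictly increasing, and you do not prove it. The competition you describe---the transverse radius $\sqrt{r^2-\alpha^2}$ shrinking while the Voronoi constraint $\max_j\langle\bm{z}_\perp,\bm{\mu}_{j,\perp}\rangle\le \alpha M|\mathcal{C}|/(|\mathcal{C}|-1)$ relaxing---is real, and nothing in your outline resolves it. Without that step the Fubini conclusion has no content.

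The paper's route removes this obstacle by changing the foliation. Rather than slice by the latitude $\alpha$ and average over each slice, it refines $\mathcal{V}_i$ into the sub-cells $\mathcal{V}_i^j$ (where $j$ is the second-ranked class) and parameterizes each by longitude: geodesics $\gamma_v(t)$ on $\mathcal{E}_r$ from $\bm{\mu}_i$ toward endpoints $\bm{v}\in\mathcal{S}_{ij}$, after normalizing $\|\bm{\mu}_i\|=r=1$. On $\mathcal{V}_i^j$ one has $d(\cdot,\mathcal{S})=d(\cdot,\mathcal{S}_{ij})$, so a direct derivative in $t$ gives
\[
\|\bm{\mu}_i-\bm{\mu}_j\|\,\frac{d}{dt}\,d(\gamma_v(t),\mathcal{S}_{ij})=\frac{|\mathcal{C}|}{1-|\mathcal{C}|}\Bigl(\sin t+\frac{\cos t\,\langle\bm{v},\bm{\mu}_j\rangle}{\|\bm{v}-\langle\bm{v},\bm{\mu}_i\rangle\bm{\mu}_i\|}\Bigr)<0\quad\text{on }(0,\tfrac{\pi}{2}),
\]
using the ETF inner products and $\langle\bm{v},\bm{\mu}_j\rangle=\langle\bm{v},\bm{\mu}_i\rangle>0$ on $\mathcal{S}_{ij}$. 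This is \emph{pointwise} monotonicity of $d$ along every ray, strictly stronger than slice-average monotonicity, and it never requires averaging over a moving transverse domain. Since the ID cap $\mathcal{I}_i\cap\mathcal{E}_r$ is exactly $\{t\le t^*\}$ for one common threshold $t^*$, the ID-before-OOD ordering holds on each ray. Your $\alpha$-slicing is the dual parameterization ($\alpha=\cos t$ here), but by averaging transversally first you manufacture the very domain-dependence problem that the paper's radial argument avoids.
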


\section{Related Work}

An extensive body of research work has been focused on developing OOD detection algorithms.
And we refer readers to comprehensive literature reviews by \citet{yang2021oodsurvey, yang2022openood, yang2022fsood, zhang2023openood, bitterwolf2023ninco}. 
Particularly, one line of work is post-hoc and builds upon pre-trained models. 
For example, \citet{liang2018enhancing, hendrycks2019scaling} and \citet{liu2020energy} design OOD score over the output space of a classifier, 
whereas \citet{lee2018simple, sun2022out, ndiour2020out} and \citet{liu2023detecting} measure OOD-ness using \emph{feature} space information. 
Moreover, \citet{huang2021importance} explore OOD detection from the gradient space. 
Our work builds on the \emph{feature} space and investigates from the largely under-explored perspective of decision boundaries.
Orthogonality, \citet{sun2021react, djurisic2022extremely} and \citet{xu2023scaling} reveal that activation shaping on pre-trained models can enhance the ID/OOD separation and improves the performance of standalone detection scores in general. 
Our experiments validates that \fdbd~ is also compatible with activation shaping methods.

Another line of work explores the regularization of OOD detection in training.
For example, \citet{devries2018learning} and \citet{hsu2020generalized} propose OOD-specific architecture whereas \citet{wei2022mitigating, huang2021mos} and \citet{ming2022exploit} design OOD-specific training loss. 
In addition, \citet{tack2020csi} propose an OOD-specific contrastive learning scheme, while \citet{tao2023non} and \citet{du2022vos} explore methods for constructing virtual OOD samples to facilitate OOD-aware training.
Recently, \citet{fort2021exploring} reveal that finetuning a visual transformer with OOD exposure significantly can improve OOD detection performance. 
Our work does not assume specific training schemes and does not belong to this school of work. 

\section{Conclusion}

In this work, we propose an efficient and effective OOD detector \fdbd~ based on the novel perspective of feature distances to decision boundaries. We first introduce a closed-form estimation to measure the feature distance to decision boundaries. 
Based on our estimation method, we reveal that ID samples tend to reside further away from the decision boundary than OOD samples. 
Moreover, we find that ID and OOD samples are better separated when compared at equal deviation levels from the mean of training features. 
By regularizing feature distances to decision boundaries based on feature deviation from the mean, we design a decision boundary-based OOD detector that achieves state-of-the-art effectiveness with minimal latency overhead.
We hope our algorithm can inspire future work to explore model uncertainty from the perspective of decision boundaries, both for OOD detection and other research problems such as adversarial robustness and domain generalization.

\section*{Impact Statement}
This paper presents work whose goal is to advance the field of Machine Learning. There are many potential societal consequences of our work, none of which we feel must be specifically highlighted here.

\bibliography{example_paper}
\bibliographystyle{icml2024}

\newpage

\appendix
\onecolumn

\section{Proof for Section~\ref{sec:justification}}\label{app:proof}

Under the setups in Section~\ref{sec:justification}, we slice the geometric space into regions $\mathcal{V}_i^j$, $i, j \in \{1, ..., C\}$, defined by
\vspace{-2mm}
$$\mathcal{V}_i^j \doteq \{\bm{z}: \<\bm{z}, \bm{\mu}_i\> > \<\bm{z}, \bm{\mu}_j\> \ge \max_{k \neq i, j}\<\bm{z}, \bm{\mu}_k\> \}. \vspace{-3mm}$$
Geometrically, $\mathcal{V}_i^j$ represents the region within the decision region $\mathcal{V}_i$ of class $i$ where the second most likely class is $j$. 
For any $\bm{z} \in \mathcal{V}_i^j$, we have $d(\bm{z}, \mathcal{S}) = d(\bm{z}, \mathcal{S}_{ij})$.
In the following, we establish Proposition~\ref{prop:prop1} and Proposition~\ref{prop:prop2} in region $\mathcal{V}_i^j$ for any $i$, $j$, thereby confirming their validity in the entire region thanks to symmetry. 

\textbf{Proof of Proposition~\ref{prop:prop1}}

\begin{proof}
By definition, any $\bm{z}_0 \in \mathcal{E}_{r_0}$ satisfies $ \| \bm{z}_0 \| = r_0$.
Scaling $\bm{z}_0$ by $r_1 / r_0$ yields $\bm{z}_1 = r_1 / r_0 \cdot \bm{z}_0$. 
We have $ \|\bm{z}_1\| = r_1 /r_0 \cdot \| \bm{z}\|_0 = r_1$, indicating that $\bm{z}_1$ is an element of $\mathcal{E}_{r_1}$.
Conversely, for any $\bm{z}_1 \in \mathcal{E}_{r_1}$, we can obtain $\bm{z}_0 = (r_0 / r_1) \cdot \bm{z}_1 \in \mathcal{E}_{r_0}$.
This establishes a one-to-one mapping between elements in $\mathcal{E}_{r_0}$ and $\mathcal{E}_{r_1}$.
Considering any pair $(\bm{z}_0, \bm{z}_1)$, we have
\begin{equation}\label{eq:scaling}
\begin{aligned}
d(\bm{z}_0, \mathcal{S}) = d(\bm{z}_0, \mathcal{S}_{ij}) &
= \frac{\<\bm{z}_0, \bm{\mu}_i - \bm{\mu}_j \>}{\| \bm{\mu}_i - \bm{\mu}_j \|} 
= r_0 / r_1 \cdot \frac{\<\bm{z}_1, \bm{\mu}_i - \bm{\mu}_j\>}{\| \bm{\mu}_i - \bm{\mu}_j \|}
& < \frac{\<\bm{z_1}, \bm{\mu}_i - \bm{\mu}_j\>}{\| \bm{\mu}_i - \bm{\mu}_j \|} = d(\bm{z}_1, \mathcal{S}_{ij}) = d(\bm{z}_1, \mathcal{S}),
\end{aligned}  
\end{equation}
indicating a consistent relative ordering between elements in $\mathcal{E}_{r_0}$ and $\mathcal{E}_{r_1}$ 
Therefore, Proposition~\ref{prop:prop1}, which asserts the ordering of the mean between these two sets, is validated.
\end{proof}

\noindent \textbf{Proof of Proposition~\ref{prop:prop2}}

\begin{proof}
Without loss of generality, we assume that $\| \bm{\mu}_i\| = 1$ for $\forall i \in \mathcal{C}$ and the distance $r = 1$. 
To parameterize the element $\bm{z}$ within region $\mathcal{V}_i^j \cap \mathcal{E}_{r = 1}$ for given $i, j$, we consider the geodesic on sphere $\mathcal{E}_{r = 1}$ that extends from the class mean $\bm{\mu}_i$ to element $\bm{z}$,  and further extends to point $\bm{v} \in \mathcal{S}_{ij} \cap \mathcal{E}_{r = 1}$:
$$ \gamma_v(t) = \cos(t)\bm{\mu}_i + \sin(t)\frac{\bm{v} - \< \bm{v}, \bm{\mu}_i\> \bm{\mu}_i}{\| \bm{v} - \< \bm{v}, \bm{\mu}_i\>\bm{\mu}_i \|}.$$
For any $\bm{z} \in \mathcal{V}_i^j \cap \mathcal{E}_{r = 1}$ and its corresponding $\bm{v}$, we have $\bm{z}$ residing on the geodesic $\gamma_v(t)$ with $t = \arccos{\< \bm{z}, \bm{\mu}_i\>}$. 

Geometrically, along a geodesic $\gamma_v(t)$, the parameter $t$ increases as one moves from the ID region $\mathcal{I} \cap \mathcal{E}_{r = 1}$ to the OOD region $\mathcal{I}^\complement \cap \mathcal{E}_{r = 1}$.
Moreover, $d(\gamma_v(t), \mathcal{S})$ is equivalent to $d(\gamma_v(t), \mathcal{S}_{ij})$ given that the geodesic resides within $\mathcal{V}_i^j$. 
Therefore, to show Proposition~\ref{prop:prop2} holds for $\forall \bm{z} \in \mathcal{V}_i^j \cap \mathcal{E}_{r = 1}$, it suffices to show that the function
$d(\gamma_v(t), \mathcal{S}_{ij})$
decreases with $t$. 
Diving into the derivatives of $d(\gamma_v(t), \mathcal{S}_{ij})$ with respect to $t$, we have: 
\begin{align}
    \| \bm{\mu}_i - \bm{\mu}_j \| \frac{d}{dt} d(\gamma_v(t),\mathcal{S}_{ij}) &=\<\gamma’_v(t),\bm{\mu}_i-\bm{\mu}_j\> = \< - \sin(t)\bm{\mu}_i +\cos(t)\frac{\bm{v} - \< \bm{v}, \bm{\mu}_i\> \bm{\mu}_i}{\| \bm{v} - \< \bm{v}, \bm{\mu}_i\>\bm{\mu}_i \|},\bm{\mu}_i-\bm{\mu}_j\>  \\
    &= -\sin(t) + \frac{\sin(t)}{1-C}  + \frac{\cos(t)}{\| \bm{v} - \< \bm{v}, \bm{\mu}_i\>\bm{\mu}_i \|} \cdot (\<\bm{v}, \bm{\mu}_i\> - \<\bm{v}, \bm{\mu}_j\> - \< \bm{v}, \bm{\mu}_i \>+ \frac{\<\bm{v}, \bm{\mu}_i \>}{1-C})\\
    &= \frac{C}{1-C} (\sin(t) + \frac{1}{\| \bm{v} - \< \bm{v}, \bm{\mu}_i\>\bm{\mu}_i \|}\cos(t)\< \bm{v}, \bm{\mu}_j \> ). \label{eq:derivitive}
\end{align}
We remark that Eqn.~\ref{eq:derivitive} remains negative within the feasible range of parameter $t$, where $\sin(t) > 0$ and $\cos(t) > 0$. 
This is because the parameter $t$ has its minimum at $\bm{\mu}_i$ with $t_{min} = 0$ and reaches max at $\bm{v}$ with $ t_{max} = \T{\arccos}(\<\bm{v},\bm{\mu}_i\>)$.
As $\<\bm{v},\bm{\mu}_i \> > 0$ from the definition of $\mathcal{V}_i^j$, we have $t_{max} < \frac{\pi}{2}$, ensuring that $t$ remains within the interval $t \in (0, \frac{\pi}{2})$.
\end{proof}

\section{Theoretical Justification for Performance Enhancement through Regularization}\label{app: regularization}

In the following, \( x \) denotes the feature distance to the training feature mean, and \( y \) denotes the feature distance to decision boundaries.
$f_{xy}$ and $g_{xy}$ denote the joint probability density functions of  \( x \) and  \( y \) for ID and OOD samples, respectively.
The notation in this section may vary from the rest of the paper for clarity and ease of presentation within this context. Please refer to the corresponding sections for consistent notation throughout the paper.

In Section~\ref{sec:fdbd}, we regularize \( y \) with respect to \( x \) to compare the distance of ID/OOD features to decision boundaries at the same deviation levels from the training feature mean. 
Eqn.~\ref{eq:scaling} provides intuition on how our regularization effectively enables comparison at the same deviation level \( x \), as \( y \)  scales linearly with \( x \) under our modeling. 
Thus, the regularization effectively conditions \( y \) on \( x \). 

In Proposition~\ref{prop:regularization} below, we analytically justify why conditioning enhances ID/OOD separation, thereby explaining the regularization-induced enhancement observed in Section~\ref{sec:fdbd} and Section~\ref{sec:ablation}.
Specifically, as Figure~\ref{fig:intuition} (Section~\ref{sec:fdbd}) and Table~\ref{tab:ablation-reg} (Section~\ref{sec:ablation}) show the ID and OOD samples cannot be distinguished by \( x \) alone, we consider the case where the marginal distribution of \( x \) is the same for ID and OOD, i.e., $f_x = g_x$. 

\begin{proposition}\label{prop:regularization}
Under Kullback–Leibler (KL) divergence $D_{KL}$, we have:   
$$D_{KL} (f_y || g_y) \leq D_{KL}(f_{y|x} || g_{y|x}).$$
Here, $f_y$ and $g_y$ denote the marginal distribution of feature distance to decision boundaries for ID and OOD samples respectively, whereas $f_{y|x}$ and $g_{y|x}$ denote the conditional distribution \emph{w.r.t.} feature deviation level from the training feature mean for ID and OOD samples respectively. 
\end{proposition}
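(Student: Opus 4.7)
The plan is to derive the inequality by combining the chain rule for Kullback–Leibler divergence with the data processing inequality, exploiting the assumption $f_x = g_x$ that makes the marginal over $x$ uninformative. The right-hand side $D_{KL}(f_{y|x} \| g_{y|x})$ should be read as the expected conditional KL divergence $\mathbb{E}_{x \sim f_x}\!\left[ D_{KL}(f_{y|x} \| g_{y|x}) \right]$, since conditional KL divergence is defined only pointwise in $x$ and this is the standard object appearing in the chain rule.

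First I would write down the chain rule for KL divergence on the joint distributions:
\begin{equation*}
D_{KL}(f_{xy} \| g_{xy}) = D_{KL}(f_x \| g_x) + \mathbb{E}_{x \sim f_x}\!\left[ D_{KL}(f_{y|x} \| g_{y|x}) \right].
\end{equation*}
Under the hypothesis $f_x = g_x$ the first term on the right vanishes, so the joint divergence equals the expected conditional divergence. Next I would invoke the data processing inequality (equivalently, the monotonicity of KL under marginalization, viewed as the deterministic channel $(x,y) \mapsto y$): marginalizing out $x$ can only decrease divergence, giving
\begin{equation*}
D_{KL}(f_y \| g_y) \leq D_{KL}(f_{xy} \| g_{xy}).
\end{equation*}
Chaining these two facts yields exactly the claimed bound.

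For rigor I would briefly justify the data processing step from first principles using the log-sum inequality applied pointwise in $y$:
\begin{equation*}
\int f_{xy}(x,y)\log\frac{f_{xy}(x,y)}{g_{xy}(x,y)} \,\mathrm{d}x \;\geq\; f_y(y) \log\frac{f_y(y)}{g_y(y)},
\end{equation*}
and integrating over $y$. This makes the argument self-contained and avoids appealing to the abstract data processing theorem.

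The main obstacle I anticipate is notational rather than mathematical: clarifying that the right-hand side is an \emph{expected} conditional KL divergence, and explaining why the assumption $f_x = g_x$ (motivated by the empirical observation that $\|z - \mu_{train}\|_2$ alone cannot discriminate ID from OOD) is the natural regime in which the inequality is informative. Once that interpretation is pinned down, the proof reduces to two lines of standard information-theoretic manipulation, and the operational meaning is clear: conditioning on the deviation level $x$ recovers discriminative information about $y$ that marginalization had washed out, which is precisely the regularization effect observed empirically.
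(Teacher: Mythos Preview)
Your proof is correct and essentially identical to the paper's: both hinge on the chain rule for KL divergence together with the assumption $f_x = g_x$. The only cosmetic difference is that you invoke the data processing inequality $D_{KL}(f_y \,\|\, g_y) \leq D_{KL}(f_{xy} \,\|\, g_{xy})$ as a named fact (and offer a log-sum justification), whereas the paper obtains the same inequality by applying the chain rule a second time in the $y$-direction and then using non-negativity of $D_{KL}(f_{x|y} \,\|\, g_{x|y})$.
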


\begin{proof}
Following the chain rule of KL divergence, we have
$$ D_{KL} (f_{xy} || g_{xy}) = D_{KL} (f_x || g_x) + D_{KL} (f_{y|x} || g_{y|x}). $$
Symmetrically, we also have: 
$$ D_{KL} (f_{xy} || g_{xy}) = D_{KL} (f_y || g_y) + D_{KL} (f_{x|y} || g_{x|y}). $$
Combining both, we have: 
$$ D_{KL} (f_{y} || g_{y}) = D_{KL} (f_x || g_x) + D_{KL} (f_{y|x} || g_{y|x}) - D_{KL} (f_{x|y} || g_{x|y}).$$
Remind that $D_{KL} (f_x || g_x) = 0$, as $f_x = g_x$. 
Also, $D_{KL} (f_{x|y} || g_{x|y}) \geq 0$ due to the non-negativity of KL divergence. 
Therefore, we have: 
$$D_{KL} (f_y || g_y) \leq D_{KL}(f_{y|x} || g_{y|x}).$$
\end{proof}

\section{Evaluation on DenseNet}\label{sec:densenet}

We now extend our evaluation to DenseNet~\citep{huang2017densely}.
The CIFAR-10 classifier we evaluated with achieves a classification accuracy of $94.53\%$. 
We consider the same OOD test sets as in Section~\ref{sec:cifar}. 
The performance shown in Table~\ref{tab:densenet} further indicates the effectiveness and efficiency of our proposed detector across different network architectures.

\begin{table*}
\caption{\textbf{fDBD achieves superior performance with negligible latency overhead on DenseNet.} 
Evaluated with FPR95, AUROC, and inference latency. 
$\uparrow$ indicates that larger values are better and vice versa.
Best performance highlighted in \textbf{bold}.
}
\label{tab:densenet}
\begin{center}
\includegraphics[width=0.94\textwidth]{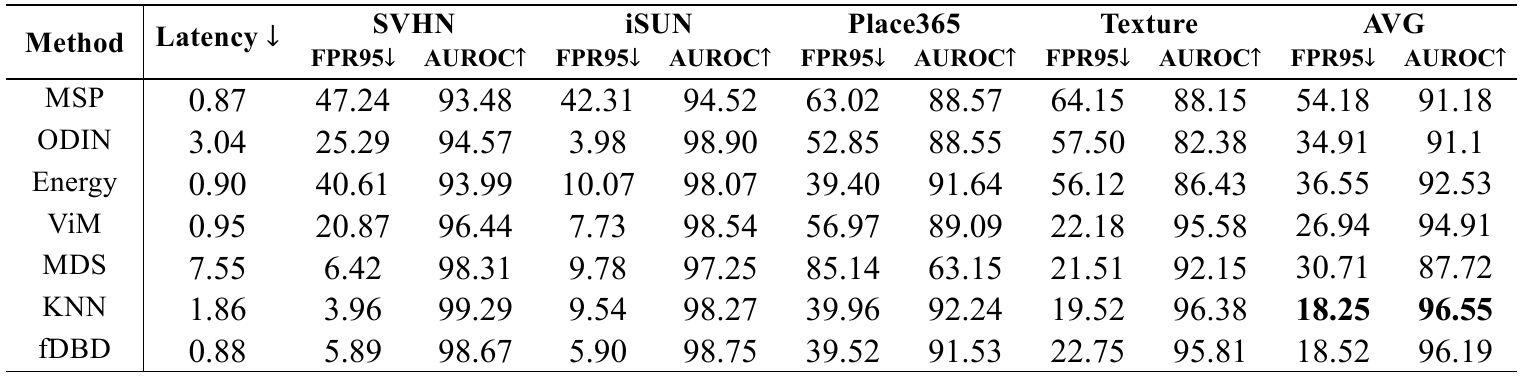}
\end{center}
\vspace{-3mm}
\end{table*}

\section{Evaluation under Activation Shaping}\label{sec:react_full}

In Table~\ref{tab:w_react_full}, we compare the performance of \fdbd~and Energy under activation shaping methods ReAct, ASH, and Scale. 
For both \fdbd~and Energy, we follow the original paper and set the value of the percentile hyperparameter to 80, 90, 90 for ReAct, ASH, and Scale, respectively. 
Experiments are on an ImageNet ResNet-50 classifiers following the detailed setups in Section~\ref{sec:imagenet}. 
Looking into Table~\ref{tab:w_react_full}, we observe that \fdbd~with Scale achieves state-of-art performance on this benchmark, comparable to Energy with Scale. 
\begin{table*}
\footnotesize
\centering
\caption{fDBD is competitive compared to Energy under activation shaping algorithms ReAct, ASH, and Scale on ImageNet Benchmark. 
 }\label{tab:w_react_full}
\vspace{1mm}
\includegraphics[width=0.89\textwidth]{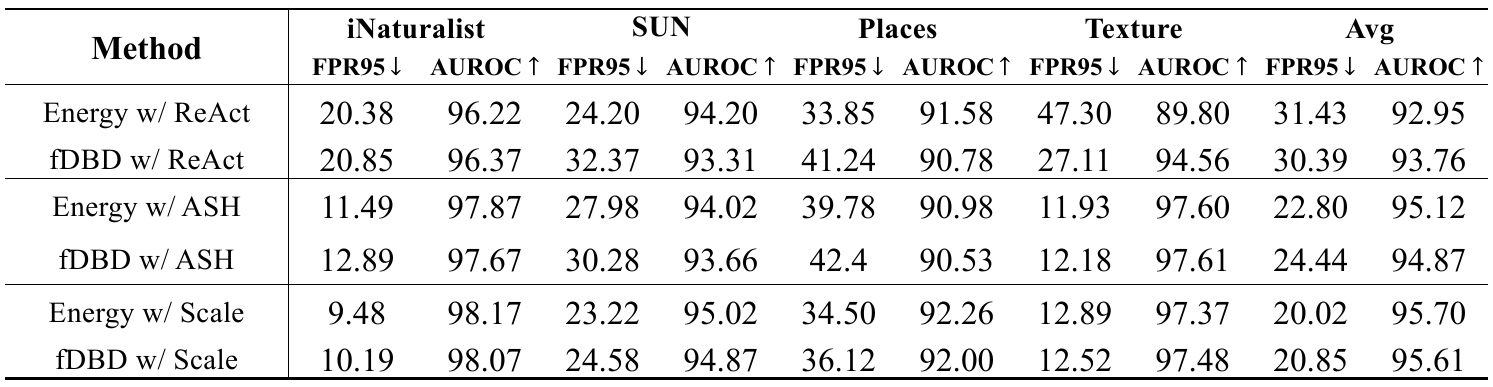}
\vspace{-3mm}
\end{table*}

\section{Evaluation under Domain Shift}

Our \fdbd, as a detector for semantic shift induced by mismatch in training/test class types,  remains effective when ID samples undergo moderate domain shift in real life. 
In Table~\ref{tab:domain_shift}, we compare fDBD performance with clean and moderately corrupted ID samples on CIFAR-10 benchmarks. 
Specifically, we consider CIFAR-10-C \cite{hendrycks2019benchmarking} with severity level 1 \& 2. 
For each severity level, we construct an aggregated dataset by sampling in total 10,000 images from all 4 classes of corruption: Noise, Blur, Weather, and Digital. 
For the rest of the setups, we follow Section~\ref{sec:cifar} and report the average AUROC across OOD datasets. 
As shown in Table~\ref{tab:domain_shift}, fDBD’s performance degrades slightly as the corruption level increases.
Nevertheless, fDBD remains highly effective within a moderate range of domain shift. 
\vspace{-3mm}

\begin{table*}[h]
\centering
\caption{Performance of fDBD with CIFAR-10 / CIFAR-10-C as ID samples on CIFAR-10 Benchmark. 
 }\label{tab:domain_shift}
\vspace{1mm}
\includegraphics[width=0.5\textwidth]{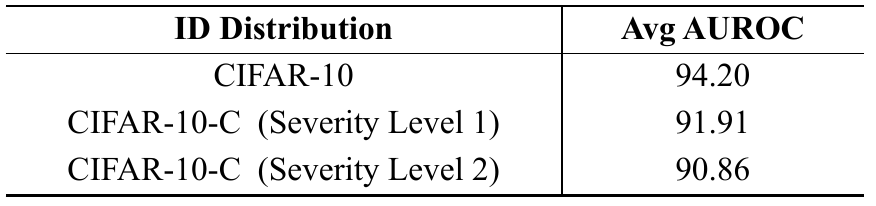}
\vspace{-1mm}
\end{table*}

\section{Implementation Details}\label{app:imp-detail}

\subsection{CIFAR-10}

\textbf{ResNet-18 w/ Cross Entropy Loss}
For experiments presented in Figure~\ref{fig:overview}~\textit{Right}, Figure~\ref{fig:regularization}, Figure~\ref{fig:intuition}, Figure~\ref{fig:ablation_k}~\textit{Left}, Table~\ref{tab:domain_shift} and part of Table~\ref{tab:cifar10-resnet}, we evaluate on a CIFAR-10 classifier of ResNet-18 backbone trained with cross entropy loss. 
The classifier is trained for 100 epochs, with a start learning rate 0.1 decaying to 0.01, 0.001, and 0.0001 at epochs 50, 75, and 90 respectively. 

\noindent \textbf{ResNet-18 w/ Contrastive Loss}
For part of Table~\ref{tab:cifar10-resnet}, we experiment with a CIFAR-10 classifier of the ResNet-18 backbone trained with supcon loss. 
Following \citet{khosla2020supervised}, the model is trained with for 500 epochs with batch size 1024. 
The temperature is set to 0.1.
The cosine learning rate \cite{loshchilov2016sgdr} starts at 0.5 is used. 

\noindent \textbf{DenseNet-101 w/ Cross Entropy Loss}
For experiments presented in Table~\ref{tab:densenet}, we evaluate on a CIFAR-10 classifier of DenseNet-101 backbone. 
The classifier is trained following the set up in \cite{huang2017densely} with depth $L = 100$ and growth rate $k = 12$.

\subsection{ImageNet}

\textbf{ResNet-50 w/ Cross-Entropy Loss}
For evaluation on ImageNet in Figure~\ref{fig:ablation_k}~\textit{Right},  part of Table~\ref{tab:imagenet}, Table~\ref{tab:w_react}, 
Table~\ref{tab:ablation-reg}, 
and Table~\ref{tab:w_react_full} we use the default ResNet-50 model trained with cross-entropy loss provided by Pytorch.
See training recipe here: 
\url{https://pytorch.org/blog/how-to-train-state-of-the-art-models-using-torchvision-latest-primitives/}.

\textbf{ResNet-50 w/ Supervised Contrastive Loss}
For part of Table~\ref{tab:imagenet}, we experiment with a ImageNet classifier of ResNet-50 backbone trained with supcon loss. 
Following \citet{khosla2020supervised}, the model is trained with for 700 epochs with batch size 1024. 
The temperature is set to 0.1.
The cosine learning rate \cite{loshchilov2016sgdr} starts at 0.5 is used. 

\textbf{ImageNet ViT}
In Table~\ref{tab:vit}, we evaluate on the pytorch implementation of ViT and the default checkpoint, available \url{https://github.com/lukemelas/PyTorch-Pretrained-ViT/tree/master}.

\section{Baseline Methods}\label{app:baseline}
We provide an overview of our baseline methods in this session. 
We follow our notation in Section~\ref{sec:method}.
In the following, a lower detection score indicates OOD-ness.

\textbf{MSP} 
\citet{hendrycks2016baseline} propose to detect OOD based on the maximum softmax probability. 
Given a test sample $\bm{x}$, the detection score of MSP can be represented as: 

\begin{equation}
\frac{\exp{(\bm{w}_{f(\bm{x})}^T \bm{z_x}} + b_{f(\bm{x})})}{\sum_{c \in \mathcal{C}} \exp{(\bm{w}_c^T \bm{z_x}} + b_c)},
\end{equation}

\noindent where $\bm{z_x}$ is the penultimate feature space embedding of $\bm{x}$.
Note that calculating the denominator of the softmax score function is an $\Omega(|\mathcal{C}|T(\texttt{exp}))$ operation, where $T(\texttt{exp})$ is the computational complexity for evaluating the exponential function, which is precision related and non-constant. 
Note that the on-device implementation of exponential functions often requires huge look-up tables, incurring significant delay and storage overhead. 
Overall, the computational complexity of MSP on top of the inference process is $\Omega(|\mathcal{C}|T(\texttt{exp}))$. 


\textbf{ODIN}
\citet{liang2018enhancing} propose to amplify ID and OOD separation on top of MSP through temperature scaling and adversarial perturbation. 
Given a sample $\bm{x}$, ODIN constructs a noisy sample $\bm{x'}$ from $\bm{x}$ following
\begin{equation}
\bm{x'} = \bm{x} - \epsilon \text{sign} \nabla_{\bm{x}}\frac{\exp{(\bm{w}_{f(\bm{x})}^T \bm{z_x}} + b_{f(\bm{x})})}{\sum_{c \in \mathcal{C}} \exp{(\bm{w}_c^T \bm{z_x}} + b_c)}.
\end{equation}
Denote the penultimate layer feature of the noisy sample $\bm{x'}$ as $\bm{h'}$, ODIN assigns OOD score following: 

\begin{equation}
\frac{\exp{((\bm{w}_c^T \bm{h'}} + b_{c})/T)}{\sum_{c' \in \mathcal{C}} \exp{((\bm{w}_c'^T \bm{h'}} + b_{c'})/T)},
\end{equation}

\noindent where $c$ is the predicted class for the perturbed sample and $T$ is the temperature. 
ODIN is a hyperparameter-dependent algorithm and requires additional computation and dataset for hyper-parameter tuning. 
In our implementation, we set the noise magnitude as 0.0014 and the temperature as 1000. 

The computational complexity of ODIN is architecture-dependent. 
This is because the step of constructing the adversarial example requires back-propagation through the NN, whereas the step of evaluating the softmax score from the adversarial example requires an additional forward pass. 
Both steps require accessing the whole NN, which incurs significantly higher computational cost than our \fdbd~which only requires accessing the penultimate NN layer. 

\textbf{Energy}
\citet{liu2020energy} design an energy-based score function over the logit output.
Given a test sample $\bm{x}$, the energy based detection score can be represented as: 

\begin{equation} 
- \log \sum_{c \in \mathcal{C}} \exp{(\bm{w}_c^T \bm{z_x}} + b_c),
\end{equation}
where $\bm{z_x}$ is the penultimate layer embedding of $\bm{x}$.
The computational complexity of Energy on top of the inference process is $\Omega(|\mathcal{C}|T(\texttt{exp})+T(\texttt{log}))$, whereas $T(\texttt{exp})$ and $T(\texttt{log})$ are the computational complexity functions for evaluating the exponential and logarithm functions respectively. 
Note that the on-device implementation of exponential functions and the logarithm functions often requires huge look-up tables, incurring significant delay and storage overhead.

\textbf{ReAct}
\citet{sun2021react} build upon the energy score proposed by \citet{liu2020energy} and regularizes the score by truncating the penultimate layer estimation. 
We set the truncation threshold at $90$ percentile in our experiments.  


\textbf{ASH} \citet{djurisic2022extremely} build upon the energy score proposed by \citet{liu2020energy}. 
Prior to computing the Energy score, ASH sorts each feature to find the top-k elements, scales the top-k elements, and sets the rest to zero. 
We note that in addition to the cost of Energy, ASH introduces a sorting cost of $O(P \log k)$, where $P$ is the penultimate layer dimension. 

\textbf{Scale} \citet{xu2023scaling} build upon the energy score proposed by \citet{liu2020energy}. 
Prior to the Energy score, Scale sorts each feature to find the top-k elements. Based on the ratio between the sum of top-k elements and the sum of all elements, \citet{xu2023scaling} scale all elements in the feature. 
We note that in addition to the cost of Energy, Scale also introduces a sorting cost of $O(P \log k)$, where $P$ is the penultimate layer dimension.

\textbf{MDS}
On the feature space, \citet{lee2018simple} model the ID feature distribution as multivariate Gaussian and designs a Mahalanobis distance-based score: 

\begin{equation}\label{eq:maha_score}
\max_c - (\bm{e_x} - \hat{\bm{\mu}}_c)^T \hat{\Sigma}^{-1} (\bm{e_x} - \hat{\bm{\mu}}_c),
\end{equation}
where $\bm{e_x}$ is the feature embedding of $\bm{x}$ in a specific layer, $\hat{\mu}_c$ is the feature mean for class $c$ estimated on the training set, and $\hat{\Sigma}$ is the covariance matrix estimated over all classes on the training set.
Computing Eqn.~\eqref{eq:maha_score} requires inverting the covariance matrix $\hat{\Sigma}$ prior to inference, which can be computationally expensive in high dimensions. 
During inference, computing Eqn.~\eqref{eq:maha_score} for each sample takes $O(|\mathcal{C}|P^2)$, where $P$ is the dimension of the feature space. 
This indicates that the computational cost of MDS significantly grows for large-scale OOD detection.

On top of the basic score, \citet{lee2018simple} also propose two techniques to enhance the OOD detection performance.
The first is to inject noise into samples.
The second is to learn a logistic regressor to combine scores across layers. 
We tune the noise magnitude and learn the logistic regressor on an adversarial constructed OOD dataset, which incurs additional computational overhead. 
The selected noise magnitude in our experiments is 0.005. 

\textbf{CSI}
\citet{tack2020csi} propose an OOD-specific contrastive learning algorithm. 
In addition, \citet{tack2020csi} defines detection functions on top of the learned representation, combining two aspects: 
(1) the cosine similarity between the test sample embedding to the nearest training sample embedding 
and (2) the norm of the test sample embedding. 
As CSI requires specific training, which incurs non-tractible computational costs, we skip the computational complexity analysis for CSI here.

\textbf{SSD}
Similar to \citet{lee2018simple}, \citet{sehwag2020ssd} design a Mahalanobis-based score under the representation learning scheme.
In specific, \citet{sehwag2020ssd} propose a cluster-conditioned score:
\begin{equation}\label{eq:ssd_score}
\max_m - (\bm{e_x}/|\bm{e_x}| - \hat{\mu}_m)^T \hat{\Sigma}_m^{-1} (\bm{e_x}/|\bm{e_x}| - \hat{\mu}_m),
\end{equation}
where $\bm{e_x}/|\bm{e_x}|$ is the normalized feature embedding of $\bm{x}$ and $m$ corresponds to the cluster constructed from the training statistics. 

Computing Eqn.~\eqref{eq:ssd_score} requires inverting $m$ number of covariance matrix $\hat{\Sigma}_m$ prior to inference, which can be computationally expensive in high dimension. 
During inference, computing Eqn.~\eqref{eq:ssd_score} for each sample takes $O(|\mathcal{M}|P^2)$, where $|\mathcal{M}|$ is the number of clusters constructed in the algorithm and $P$ is the dimension of the feature space. 
This indicates that the computational cost of MDS significantly grows for large-scale OOD detection problems.

\textbf{KNN} \citet{sun2022out} propose to detect OOD based on the k-th nearest neighbor distance between the normalized features of the test sample $\bm{z_x}/|\bm{z_x}|$ and the normalized training features on the penultimate space. 
\citet{sun2022out} also observe that contrastive learning helps improve OOD detection effectiveness. 

In terms of computational complexity, normalizing the features is an $O(P)$ operation, where $P$ is the embedding dimension.
Computing the Euclidean distance between the normalized test feature and $N$ training features is an $O(NP)$ operation.
Additionally, searching for the $k_{th}$ nearest distance out of $N$ computed distances is a $O(N\log(N))$ operation.
Therefore, the overall inference complexity of KNN is $O(NP + N\log(N))$. 
Comparing to our $O(P + |\mathcal{C}|)$ algorithm \fdbd~, KNN exhibits much lower scalability for large-scale OOD detection, especially when the number of training samples $N$ significantly surpasses the number of classes $|\mathcal{C}|$. 

\textbf{ViM}
\citet{wang2022vim} propose to integrate class-specific information into feature space information by adding energy score to the feature norm in the residual space of training feature matrix. 
The detection score is designed to be:
\begin{equation}
    \alpha \sqrt{\bm{x}^T \bm{R} \bm{R} \bm{x}},
\end{equation}
where $\bm{R} \in R^{P\times(P-D)}$ correspond to the residual after subtracting the $D-$dimensional principle space.
In the preparation stage, ViM requires evaluating the residual/null space from the training data, which is computationally expensive given the data volume. 
During inference, large matrix multiplication is required, resulting in a computational complexity of $O((P - D)^2)$.

\section{Quantitative Study of the Proposed Distance Measuring method}\label{app:measuring_method}

In Section~\ref{sec:dist-measure}, we propose a closed-form estimation for measuring the feature distance to decision boundaries. 
To quantitatively understand the effectiveness and efficiency of our proposed method, we compare our method against measuring the distance via iterative optimization. 
In particular, we use targeted CW $L2$ attacks \cite{carlini2017towards} on feature space which can effectively construct an adversarial example which is classified into the target class from an iterative process.  
Empirically, CW attack-based estimation and our closed-form estimation differ by $<1.5\%$.
This implies that our closed-form estimation differs from the true value by $<1.5\%$, since estimation from a CW-attack upper bounds the distance whereas our closed-form estimation lower bounds the distance. 

We follow the Pytorch implementation of CW attacks proposed by \citet{papernot2018cleverhans} with the default parameters: initial constant 2, learning rate 0.005, max iteration 500, and binary search step 3. 
In our experiments, CW-attack has a success rate close to $100\%$.
On a Tesla T4 GPU, estimating the distance using CW attack takes 992.2ms per image per class.
In contrast, our proposed method incurs negligible overhead in inference, significantly reducing the computational cost of measuring the distance.

\section{Ablation Under FPR95}\label{app:fpr95_ablation}

In addition to the AUROC score reported in the main paper, we report our ablation study here under FPR95, the false positive rate of OOD samples when the true positive rate of ID samples is at $95\%$.
In Table~\ref{tab:ablation-fpr}, we compare the performance of OOD detection using the regularized average distance $\mathtt{regDistDB}$, the regularization term $\|z - \mu_{train} \|_2$, as well as the un-regularized average distances $\mathtt{avgDistDB}$ as detection scores, respectively.
Experiments are conducted on a ResNet-50 trained under cross-entropy loss following detailed setups in Section~\ref{sec:imagenet}.
The results in FPR95 further validate the effectiveness of regularization in our OOD detector. 

\begin{table}[H]
\vspace{-3mm}
\caption{\textbf{Regularization enhances the effectiveness of OOD detection.} 
FPR95 scores reported on ImageNet Benchmarks (lower is better). $\mathtt{regDistDB}$ outperforms $\mathtt{avgDistDB}$.
}
\label{tab:ablation-fpr}
\footnotesize
\centering
\includegraphics[width=0.55\columnwidth]{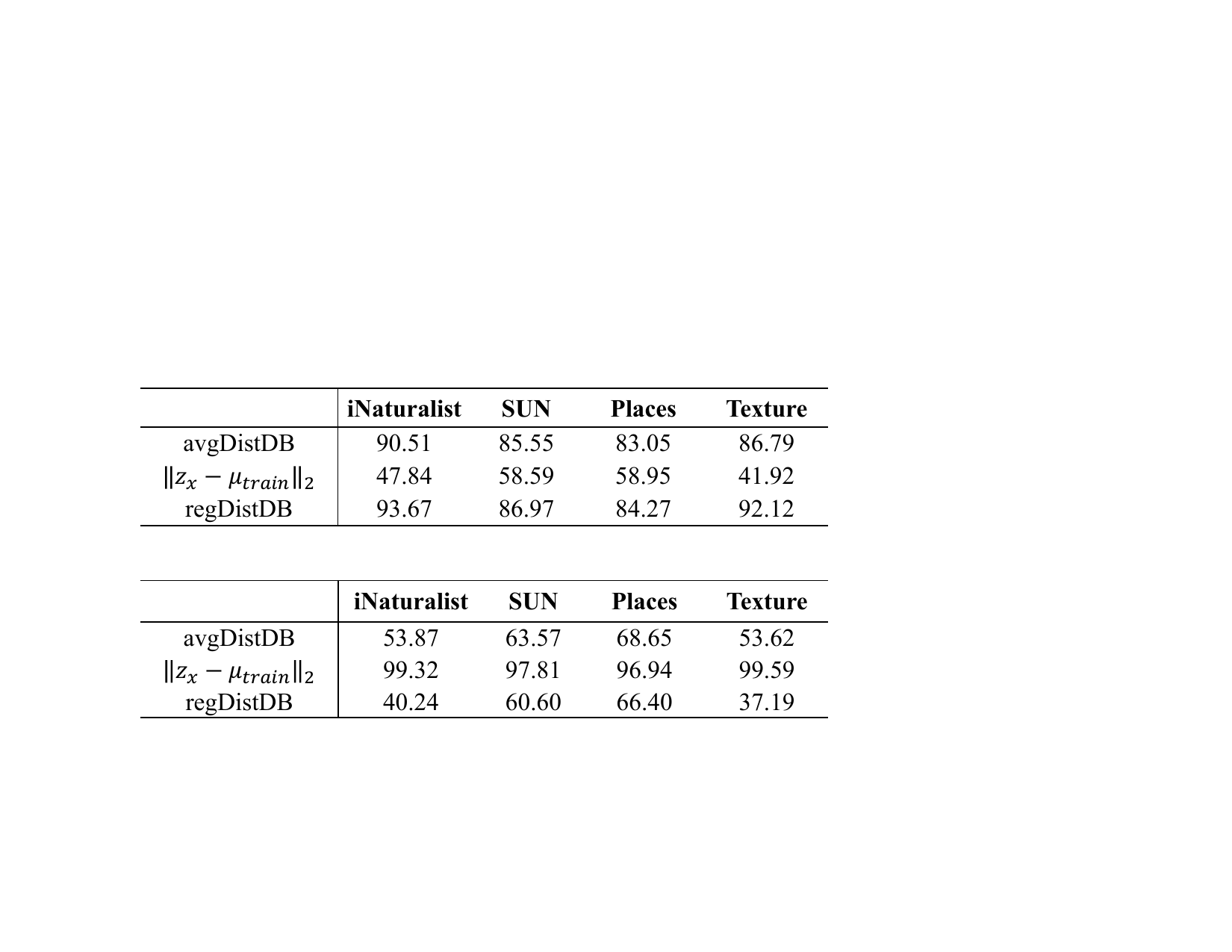}
\end{table}

\section{Feature Distances to Decision Boundaries}\label{app:individual}

We extensively validate our hypothesis that ID features tend to reside further away from decision boundaries than OOD features in Figure~\ref{fig:cifar-resnet}, Figure~\ref{fig:cifar-densenet}, and Figure~\ref{fig:imagenet-resnet}. 
To observe at a finer level of granularity, we sort per feature the estimated distances to all decision boundaries. 
On each subplot for a CIFAR-10 classifier, we plot 9 histograms, corresponding to the nearest distances, second nearest distance, and so on, up to the furthest distances. 
On each subplot for an ImageNet classifier, we sort the distance and plot every 100 ranked distances. 
We observe that ID features tend to reside further away from the decision boundaries compared to OOD samples across architectures and classification tasks.

\begin{figure*}[h]
\vspace{3mm}
\begin{center}
\centerline{\includegraphics[width=0.95\textwidth]{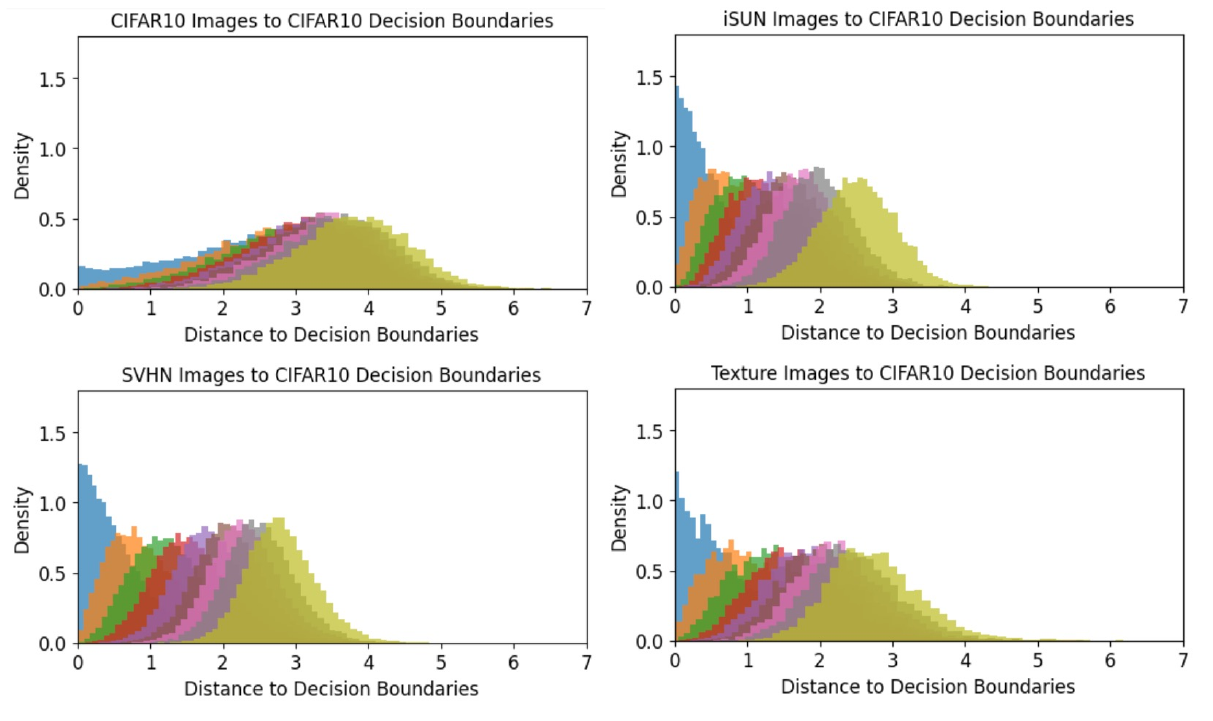}}
\end{center} 
\vspace{-5mm}
\caption{
Feature Distances to Decision Boundaries on a \textbf{ResNet-18 CIFAR-10} Classifier. 
ID features tend to be further away from the decision boundaries compared to OOD features. 
}\label{fig:cifar-resnet}
\end{figure*}

\begin{figure*}[t]
\begin{center}
\centerline{\includegraphics[width=0.95\textwidth]{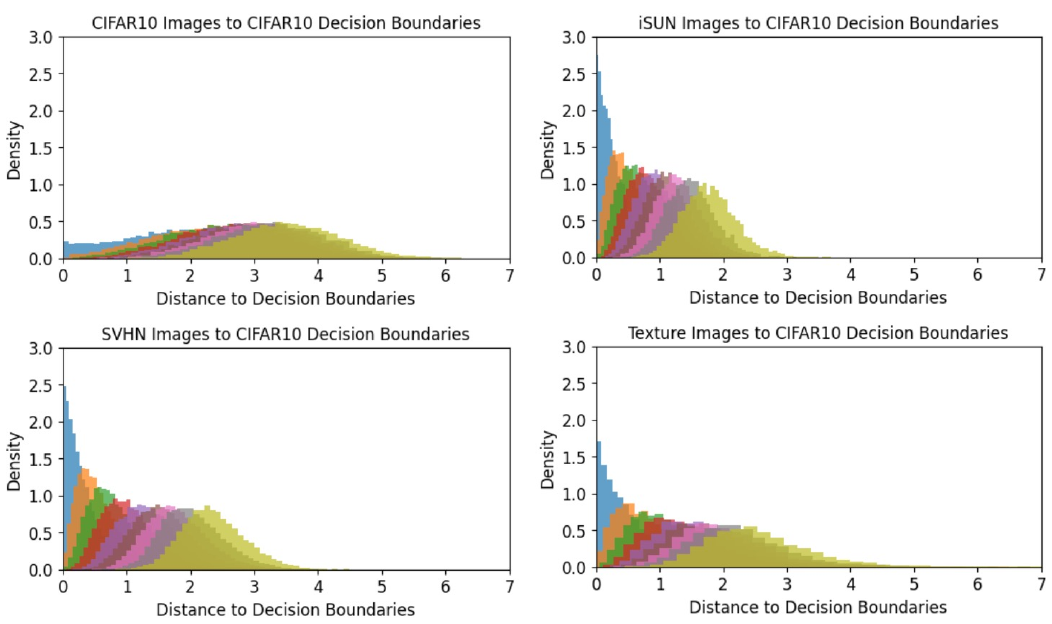}}
\end{center} 
\vspace{-5mm}
\caption{
Feature Distances to Decision Boundaries on a \textbf{DenseNet CIFAR-10} Classifier. ID features tend to be further away from the decision boundaries compared to OOD features. 
}\label{fig:cifar-densenet}
\vspace{-5.5mm}
\end{figure*}

\begin{figure*}[t]
\vspace{-3mm}
\begin{center}
\centerline{\includegraphics[width=\textwidth]{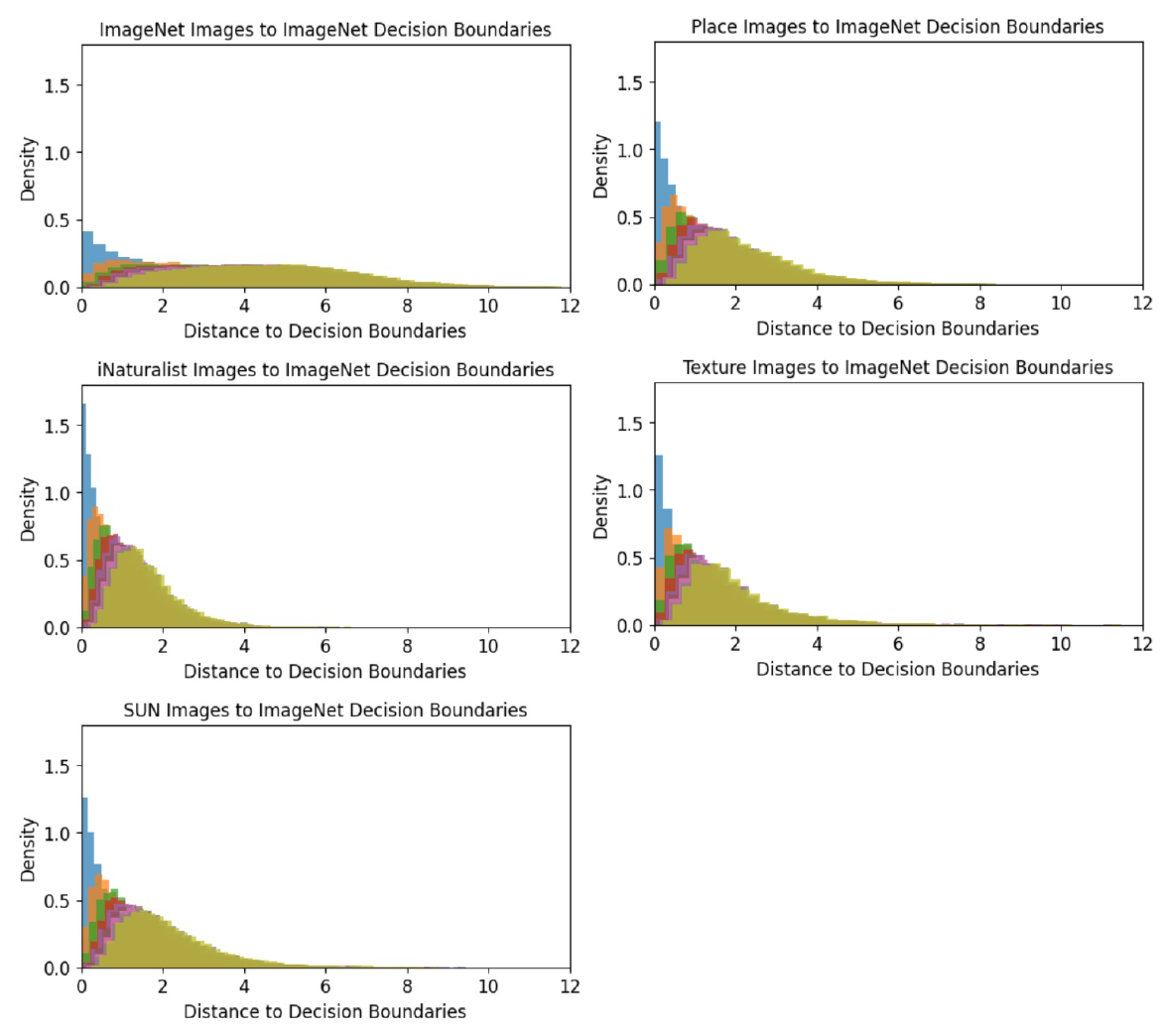}}
\end{center} 
\vspace{-5mm}
\caption{
Feature Distances to Decision Boundaries on a \textbf{ResNet-50 ImageNet} Classifier. ID features tend to be further away from the decision boundaries compared to OOD features. 
}\label{fig:imagenet-resnet}
\vspace{-5.5mm}
\end{figure*}


\end{document}